\newtheorem{lemma}{Lemma}
\newtheorem*{remark}{Remark}
\let\titleold\title
\renewcommand{\title}[1]{\titleold{#1}\newcommand{\thetitle}{#1}}
\def\maketitlesupplementary
    \crefname{section}{Sec.}{Secs.}
    \Crefname{section}{Section}{Sections}
    \Crefname{table}{Table}{Tables}
    \crefname{table}{Tab.}{Tabs.}
\newif\ifproofread
\title{\LARGE \bf
Observability Investigation for Rotational Calibration of (Global-pose aided) VIO under Straight Line Motion
}
\author{Junlin Song, Antoine Richard, and Miguel Olivares-Mendez
\thanks{Space Robotics (SpaceR) Research Group, Int. Centre for Security, Reliability and Trust (SnT), University of Luxembourg, Luxembourg.} 
}
\begin{document}

\maketitle
\thispagestyle{empty}
\pagestyle{empty}

\begin{abstract}

Online extrinsic calibration is crucial for building "power-on-and-go" moving platforms, like robots and AR devices. However, blindly performing online calibration for unobservable parameter may lead to unpredictable results. In the literature, extensive studies have been conducted on the extrinsic calibration between IMU and camera, from theory to practice. It is well-known that the observability of extrinsic parameter can be guaranteed under sufficient motion excitation. Furthermore, the impacts of degenerate motions are also investigated. Despite these successful analyses, we identify an issue with respect to the existing observability conclusion. This paper focuses on the observability investigation for straight line motion, which is a common-seen and fundamental degenerate motion in applications. We analytically prove that pure translational straight line motion can lead to the unobservability of the rotational extrinsic parameter between IMU and camera (at least one degree of freedom). By correcting the existing observability conclusion, our novel theoretical finding disseminates more precise principle to the research community and provides explainable calibration guideline for practitioners. Our analysis is validated by rigorous theory and experiments.

\end{abstract}

\begin{keywords}
Visual inertial odometry, observability analysis, self-calibration
\end{keywords}


\section{Introduction}
\label{sec:intro}

In the last two decades, visual-inertial navigation systems (VINS) have gained great popularity thanks to their ability to provide real-time and precise 6 degree-of-freedom (DoF) motion tracking in unknown GPS-denied or GPS-degraded environments, through the usage of low-cost, low-power, and complementary visual-inertial sensor rigs \cite{mourikis2009vision, delmerico2018benchmark, Meta}. An inertial sensor, IMU, provides high-frequency linear acceleration and local angular velocity measurements of the moving platform, with bias and noise. Therefore, integrating only the IMU measurements to obtain motion prediction inevitably suffers from drift. While visual sensors can estimate IMU bias and reduce the drift of pose estimation by perceiving static visual features from the surrounding environment.

To improve the accuracy, efficiency, robustness or consistency of pose estimation, numerous tightly-coupled visual-inertial odometry (VIO) algorithms have been proposed in the literature. These algorithms can be broadly divided into two categories: optimization-based methods and filter-based methods. Optimization-based methods include OKVIS \cite{leutenegger2015keyframe}, VINS-Mono \cite{qin2018vins}, and ORB-SLAM3 \cite{campos2021orb}. Filter-based methods include ROVIO \cite{bloesch2017iterated}, Multi-State Constraint Kalman Filter (MSCKF) \cite{mourikis2007multi, geneva2020openvins}, and SchurVINS \cite{fan2024schurvins}.

\begin{figure}[htbp]
  \centering
  \includegraphics[width=0.46\textwidth]{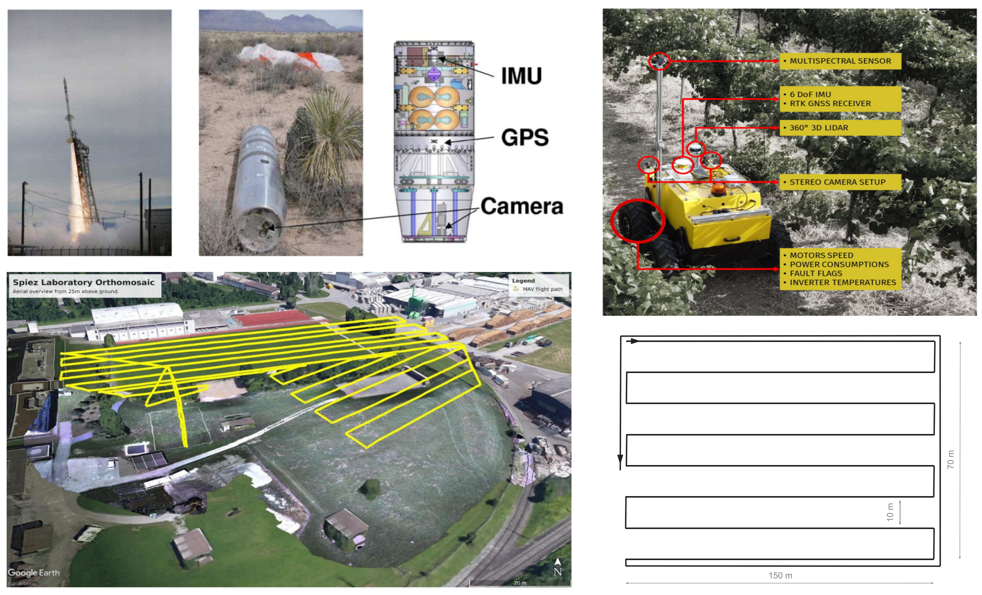}
  \caption{Various straight line movements. Top left: Spacecraft
entry, descent, and landing \cite{mourikis2009vision}. Bottom left: MAV flight path \cite{girod2022state}. Top right:  Agrobot movement in a vineyard field \cite{crocetti2023ard}. Bottom right: Survey followed by Girona 1000 AUV \cite{vial2024lie}.}
  \label{fig: intro}
\end{figure}

Before running, VIO algorithm needs to know the extrinsic parameter between IMU and camera, including 3DoF translational part and 3DoF rotational part, which is a bridge to link measurements from different sensors.
This extrinsic parameter is also critical for other visual perception applications, for example loop correction \cite{qin2018vins}, dense map \cite{zuo2021codevio}, and tracking \cite{qiu2019tracking, eckenhoff2020schmidt}. These visual perception results are represented in camera frame. To transfer these results to the body frame (IMU frame) of the robot or vehicle, accurate extrinsic parameter is desired.
A small misalignment in the extrinsic parameter could generate a large drift and error.

The extrinsic parameter is usually assumed to be rigid and constant, however, this may be not the case in practice. Considering that replacement and maintenance of sensors, and non-rigid deformation caused by mechanical vibration and varying temperature may lead to the alternation of extrinsic parameter, some researchers propose to add extrinsic parameter to state vector to perform \textit{online calibration} \cite{mirzaei2008kalman, kelly2011visual, yang2019degenerate}. If the extrinsic parameter is an observable state variable, online calibration can be resilient with poor prior calibration and converge to true value, which means robustness to the initial value. This feature helps to build "power-on-and-go" moving platforms without the need for repetitive, tedious, manual \textit{offline calibration}.

The success of online extrinsic parameter calibration depends on the observability. Remarkable works have studied the observability of extrinsic parameter between IMU and camera. With the help of artificial visual features on the calibration target board, \cite{mirzaei2008kalman} conclude that extrinsic parameter is observable if the moving platform undergoes at least 2DoF rotational excitation. An interesting corollary from \cite{mirzaei2008kalman} is that the observability of extrinsic parameter is independent of translational excitation. However, the conclusion of \cite{mirzaei2008kalman} is limited by the usage of calibration board, and cannot be applied to real operating environments without calibration board. \cite{kelly2011visual} further extend the calibration of extrinsic parameter with target-less approach, and the conclusion is updated. The moving platform should undergo at least 2DoF motion excitation for both rotation and translation, to ensure the observability of extrinsic parameter.

The above-mentioned observability studies miss the analysis of degenerate motion profiles, which could be occurred and unavoidable in practice. As a supplement, \cite{yang2019degenerate} thoroughly explore the possible degenerate motion primitives and analyze the impact of degenerate motion on the observability of calibration parameters. We note that the rotational extrinsic parameter is summarized as observable for all identified degenerate motions (see Table I in \cite{yang2019degenerate}), except for no motion. However, by observing the top subplot of Fig. 2a in \cite{yang2019degenerate}, we found that the rotational calibration results exhibit unexpected large RMSE (greater than 1 degree) for the case of pure translational straight line motion, which is clearly different from other motion cases. Actually, this distinct curve is an indicator for unobservability.

The inconsistency between the observability conclusion and the calibration results motivates the following research question as the main purpose of this work:

\textbf{\textit{Is the rotational extrinsic parameter of (global-pose aided) VIO observable under pure translational straight line motion?}}

Straight line motions are quite common and fundamental in vehicle driving \cite{jeong2019complex}, agriculture \cite{crocetti2023ard}, coverage survey \cite{girod2022state, vial2024lie}, and planetary exploration \cite{delaune2021range} (see \cref{fig: intro} and \cref{fig: kaist picture}). According to \cite{yang2019degenerate}, if the rotational extrinsic parameter is observable, it is expected that practitioners would straightforward add this parameter to the state vector to perform online calibration, which has been integrated in numerous open-sourced VIO frameworks, like OKVIS \cite{leutenegger2015keyframe}, VINS-Mono \cite{qin2018vins}, ROVIO \cite{bloesch2017iterated}, and Open-VINS \cite{geneva2020openvins}.

However, according to our novel finding (see \cref{table_Correction}), the rotational extrinsic parameter has at least one unobservable DoF when the moving platform undergoes pure translational straight line motion. This implies that performing online rotational calibration is risky, as unobservability can lead to unpredictable and incorrect calibration results. Meanwhile, the misleading observability conclusion in \cite{yang2019degenerate} may have adverse effect on future research. For example, Table III in \cite{yang2023online} is directly inherited from \cite{yang2019degenerate}. Therefore, it is vital to convey more precise principle to the community, otherwise incorrect conclusion would continue to mislead researchers. 
Next, we will verify our observability investigation through rigorous theory and solid experiments.

\begin{table*}
\caption{Observability Investigation for Rotational Calibration of (Global-pose aided) VIO under Straight Line Motion.}
\label{table_Correction}
\begin{center}
\scalebox{1.0}{
\begin{tabular}{|c|c|c|c|c|}
\hline
\multirow{2}{*}{\makecell{Motion}} & \multicolumn{2}{c|}{Pure VIO} & \multicolumn{2}{c|}{Global-pose aided VIO}\\
\cline{2-5}                 & \cite{yang2019degenerate, yang2023online} & Our novel finding & \cite{yang2019degenerate, yang2023online} & Our novel finding\\
\hline
Pure translational straight line motion & observable & at least one unobservable DoF  & observable & at least one unobservable DoF\\
\hline
\makecell{Pure translational straight line motion \\with constant velocity} & observable & fully unobservable & observable & at least one unobservable DoF\\
\hline
\end{tabular}}
\end{center}
\end{table*}

\begin{figure*}
  \centering
  \begin{subfigure}{0.32\linewidth}
    \includegraphics[width=\textwidth, height=0.75\textwidth]{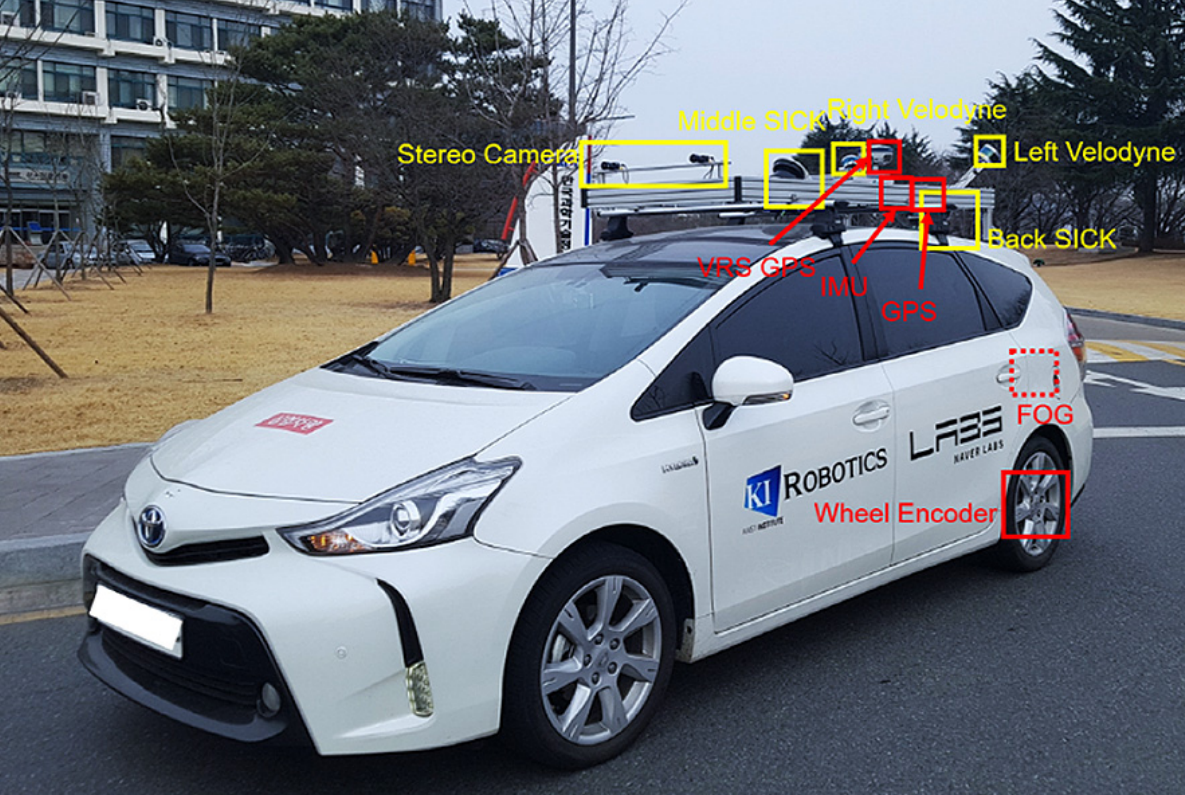}
    \caption{Dataset collection vehicle.}
  \end{subfigure}
  \hfill
  \begin{subfigure}{0.3\linewidth}
    \includegraphics[width=\textwidth, height=0.8\textwidth]{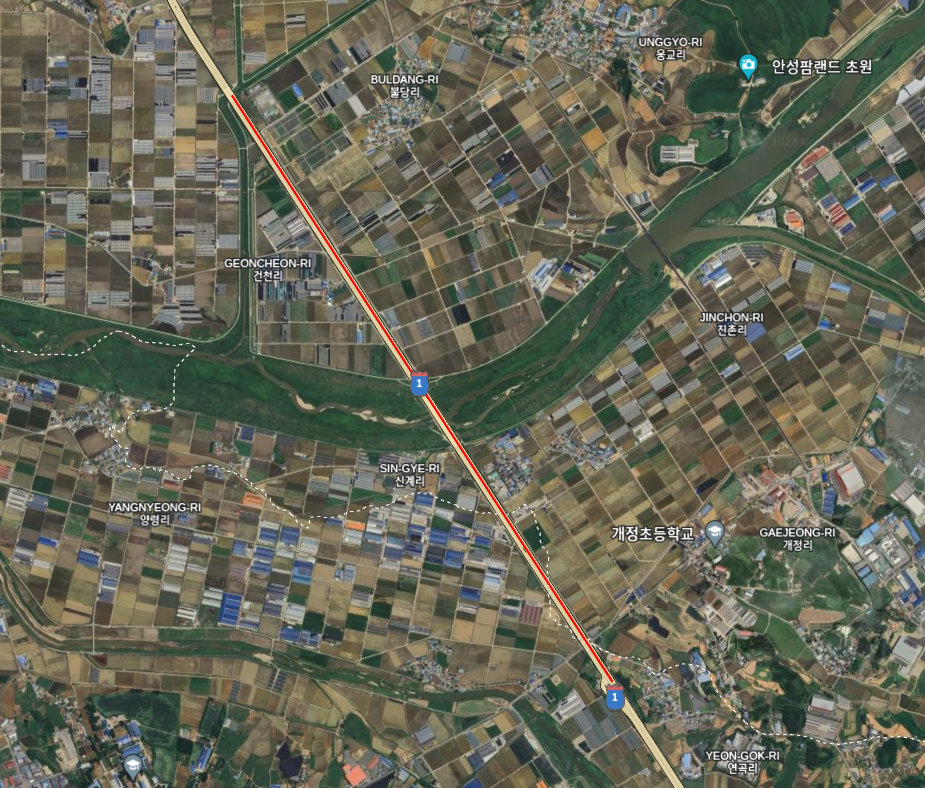}
    \caption{GPS trajectory.}
  \end{subfigure}
  \hfill
  \begin{subfigure}{0.36\linewidth}
    \includegraphics[width=\textwidth, height=0.67\textwidth]{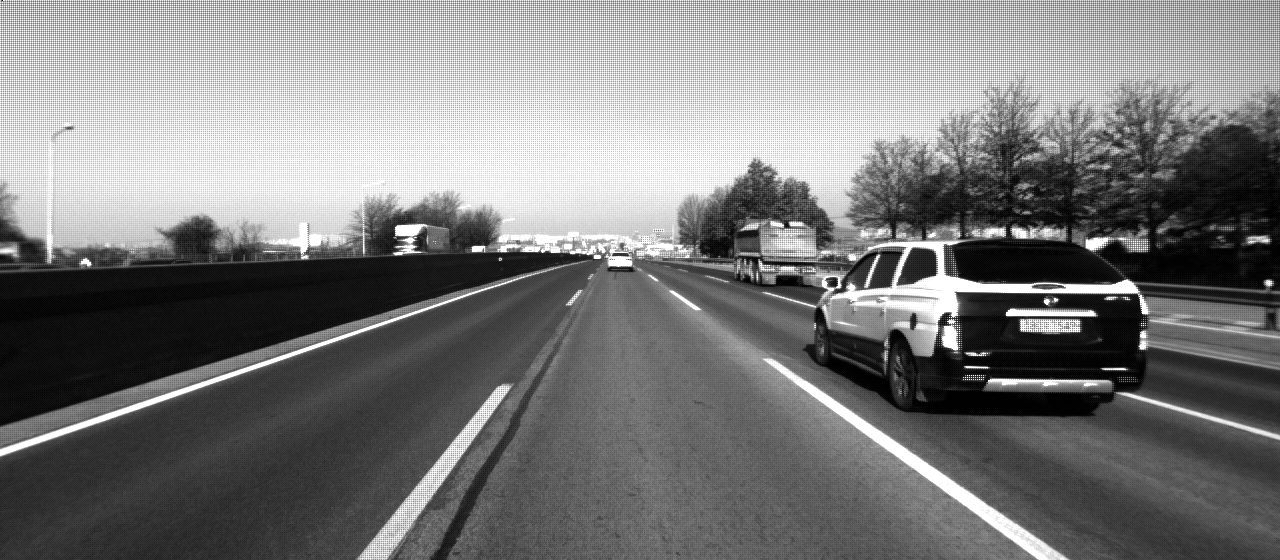}
    \caption{Image from left camera.}
  \end{subfigure}
  
  \caption{Representative pure translational straight line motion from Urban22 sequence in KAIST dataset \cite{jeong2019complex}.}
  \label{fig: kaist picture}
\end{figure*}

\section{Notation} \label{sec:notation}

The main purpose of this paper is to investigate the observability of rotational extrinsic parameter between IMU and camera presented in \cite{yang2019degenerate}. When the moving platform follows a pure translational straight line motion (no rotation), our observability conclusion regarding this rotational extrinsic parameter is different from \cite{yang2019degenerate}. Like \cite{yang2019degenerate}, we consider online calibration of rotational extrinsic parameter (rotational calibration) with two configurations, one is \textbf{pure VIO} and the other is \textbf{global-pose aided VIO}. In the following sections, we will directly analyze the observability matrix in \cite{yang2019degenerate}. As for the construction details of system model, measurement model, and observability matrix, interested readers are advised to refer to \cite{yang2019degenerate, hesch2013consistency}.

The state vector considered in this paper is
\begin{equation} \label{eq:state}
    x = {\left[ {\begin{array}{*{20}{c}}
    {{}_G^I{q^T}}&{b_g^T}&{{}^Gv_I^T}&{b_a^T}&{{}^Gp_I^T}&{{}_I^C{q^T}}&{{}^Gp_f^T}
    \end{array}} \right]^T}
\end{equation}
where ${}_G^I{q}$ represents the orientation of IMU frame $\{ I\} $ with respect to global frame $\{ G\} $, and its corresponding rotation matrix is given by ${}_G^IR$. ${}^Gv_I$ and ${}^Gp_I$ refer to the velocity and position of IMU in frame $\{ G\} $. $b_g$ and $b_a$ represent the gyroscope and accelerometer biases. ${}^Gp_f$ is augmented feature, or SLAM feature \cite{geneva2020openvins}.

${}_I^C{q}$ is rotational calibration parameter, and its corresponding rotation matrix is ${}_I^CR$. Compared to equation (1) in \cite{yang2019degenerate}, $x$ does not include ${}^C{p_I}$ and ${t_d}$, as the online calibration of translational extrinsic parameter, as well as the time offset between IMU and camera, are not the focus of this paper. Our analysis is independent of ${}^C{p_I}$ and ${t_d}$.

In following sections, ${\left[  \bullet  \right]_ \times }$ is denoted as the skew symmetric matrix corresponding to a three-dimensional vector. To simplify the description, the hat symbol $\hat {\left(  \bullet  \right)}$ is omitted, which does not affect observability analysis. Other notations are consistent with \cite{yang2019degenerate}. By assuming that the direction of straight line is denoted as $d$ in the IMU frame $\{ I\} $, we are ready for observability investigation now.

\section{Observability Investigation for Pure VIO}

Referring to equation (21) of \cite{yang2019degenerate}, in the configuration of pure VIO, the observability matrix is
\begin{equation}
    \begin{array}{l}
    {M_k} = {\Xi _k}{\Xi _{{\Gamma _k}}}\\
    {\Xi _{{\Gamma _k}}} = \left[ {\begin{array}{*{20}{c}}
    {{\Gamma _1}}&{{\Gamma _2}}&{ - {I_3}\delta {t_k}}&{{\Gamma _3}}&{ - {I_3}}&{{\Gamma _4}}&{{I_3}}
    \end{array}} \right]
    \end{array}
\end{equation}

Compared to equation (21) of \cite{yang2019degenerate}, the element ${}_{{I_k}}^GR{}_C^IR$ corresponding to ${}^C{p_I}$, and the element ${\Gamma _5}$ corresponding to ${t_d}$, have been removed in ${\Xi _{{\Gamma _k}}}$. The expressions of ${\Gamma _1} \sim {\Gamma _4}$ in ${\Xi _{{\Gamma _k}}}$ are
\begin{equation}
    \begin{array}{l}
    {\Gamma _1} = {\left[ {{}^G{p_f} - {}^G{p_{{I_1}}} - {}^G{v_{{I_1}}}\delta {t_k} + \frac{1}{2}{}^Gg\delta t_k^2} \right]_ \times }{}_{{I_1}}^GR\\
    {\Gamma _2} = {\left[ {{}^G{p_f} - {}^G{p_{{I_k}}}} \right]_ \times }{}_{{I_k}}^GR{\Phi _{I12}} - {\Phi _{I52}}\\
    {\Gamma _3} =  - {\Phi _{I54}}\\
    {\Gamma _4} = {\left[ {{}^G{p_f} - {}^G{p_{{I_k}}}} \right]_ \times }{}_{{I_k}}^GR{}_C^IR
    \end{array}
\end{equation}

The expression of ${\Gamma _1}$ in \cite{yang2019degenerate}, equation (22), has small typos. We have corrected it by referring to equation (53) in \cite{hesch2013consistency}.

In the context of pure translational motion, i.e. no rotation, the orientation of the moving platform does not change at any time. Therefore, ${}_{{I_{\left(  \bullet  \right)}}}^GR$ can be directly represented by ${}_I^GR$ (constant). Referring to equation (114) in \cite{hesch2013consistency}
\begin{equation} \label{eq:I54}
    \begin{array}{l}
    {\Gamma _3} =  - {\Phi _{I54}} = \int_{{t_1}}^{{t_k}} {\int_{{t_1}}^s {{}_{{I_\tau }}^GR} } d\tau ds\\
     = \left( {{}_I^GR} \right)\int_{{t_1}}^{{t_k}} {\int_{{t_1}}^s {\left( 1 \right)} } d\tau ds = \frac{1}{2}{}_I^GR\delta t_k^2
    \end{array}
\end{equation}

The expressions of ${\Gamma _1} \sim {\Gamma _4}$ in ${\Xi _{{\Gamma _k}}}$ become
\begin{equation}
    \begin{array}{l}
    {\Gamma _1} = {\left[ {{}^G{p_f} - {}^G{p_{{I_1}}} - {}^G{v_{{I_1}}}\delta {t_k} + \frac{1}{2}{}^Gg\delta t_k^2} \right]_ \times }{}_I^GR\\
    {\Gamma _2} = {\left[ {{}^G{p_f} - {}^G{p_{{I_k}}}} \right]_ \times }{}_I^GR{\Phi _{I12}} - {\Phi _{I52}}\\
    {\Gamma _3} = \frac{1}{2}{}_I^GR\delta t_k^2\\
    {\Gamma _4} = {\left[ {{}^G{p_f} - {}^G{p_{{I_k}}}} \right]_ \times }{}_I^GR{}_C^IR
    \end{array}
\end{equation}

\begin{lemma} \label{lemma1}
If pure VIO system undergoes pure translational straight line motion, the unobservable directions of ${}_I^CR$ depend on the projection of $d$\footnote{The definition of $d$ is described in the last paragraph of \cref{sec:notation}.} in the camera frame $\{ C\} $. The corresponding right null space of ${M_k}$ is
\begin{equation}
    {N_1} = \left[ {\begin{array}{*{20}{c}}
    {{0_{15 \times 1}}}\\
    {{}_I^CRd}\\
    { - {{\left[ {{}^G{p_f} - {}^G{p_{{I_1}}}} \right]}_ \times }{}_I^GRd}
    \end{array}} \right]
\end{equation}
\end{lemma}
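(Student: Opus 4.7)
The plan is to verify directly that $\Xi_{\Gamma_k} N_1 = 0$, which then gives $M_k N_1 = 0$ since $M_k = \Xi_k \Xi_{\Gamma_k}$. Because $N_1$ is zero in the first five state blocks, only two columns of $\Xi_{\Gamma_k}$ contribute to the product: the $\Gamma_4$ column, acting on ${}_I^C R d$, and the identity column, acting on $-[{}^G p_f - {}^G p_{I_1}]_\times {}_I^G R d$. I would compute each contribution and then sum them.

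For the calibration column, I would substitute $\Gamma_4 = [{}^G p_f - {}^G p_{I_k}]_\times {}_I^G R {}_C^I R$ and use the identity ${}_C^I R \, {}_I^C R = I_3$ to collapse the product to $[{}^G p_f - {}^G p_{I_k}]_\times {}_I^G R d$. Adding the feature contribution $-[{}^G p_f - {}^G p_{I_1}]_\times {}_I^G R d$ makes the ${}^G p_f$ skew terms cancel, leaving the residual $-[{}^G p_{I_k} - {}^G p_{I_1}]_\times {}_I^G R d$.

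The last step invokes the motion hypothesis. Since the platform translates along the constant IMU-frame direction $d$ with no rotation, the global displacement ${}^G p_{I_k} - {}^G p_{I_1}$ is a scalar multiple of ${}_I^G R d$, so the skew product against ${}_I^G R d$ vanishes. This establishes that $N_1$ lies in the right null space of $M_k$. The calibration component being ${}_I^C R d$ also matches the claimed geometric reading: an infinitesimal rotation of ${}_I^C R$ about the line direction as expressed in the camera frame, paired with a compensating feature shift, leaves all measurements invariant and is therefore unobservable.

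The verification itself is only a few lines of block algebra; the main obstacle is conceptual rather than computational, namely proposing the ansatz $N_1$ in the first place. The cancellation works only because $\Gamma_4$ carries ${}_C^I R$ on the right (which swallows the calibration-side $d$ back into the IMU frame) and because the identity column for ${}^G p_f$ supplies exactly the matching skew term. Sign conventions on $[\cdot]_\times$ and on $\Gamma_3 = -\Phi_{I54}$ are the only places a careless slip could occur, and I would recheck these against the simplified $\Gamma_1,\ldots,\Gamma_4$ expressions derived under the no-rotation assumption.
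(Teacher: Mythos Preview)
Your proposal is correct and follows essentially the same route as the paper: compute $\Xi_{\Gamma_k}N_1$, observe that only the $\Gamma_4$ and feature-identity blocks contribute, collapse $\Gamma_4\,{}_I^CRd$ using ${}_C^IR\,{}_I^CR=I_3$, cancel the ${}^Gp_f$ terms, and finish by noting that ${}^Gp_{I_k}-{}^Gp_{I_1}$ is parallel to ${}_I^GRd$. The paper phrases the last step via the identity $[Rv]_\times = R[v]_\times R^T$ and the constraint $[{}^{I_1}p_{I_k}]_\times d=0$, whereas you invoke the parallelism directly, but this is a cosmetic difference only.
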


\begin{proof}
Straight line motion indicates the following geometric constraint
\begin{equation}
    {\left[ {{}^{{I_1}}{p_{{I_k}}}} \right]_ \times }d = 0
\end{equation}

Given the above constraint, we first verify that ${N_1}$ belongs to the right null space of ${\Xi _{{\Gamma _k}}}$.
\begin{equation}
    \begin{array}{l}
    {\Xi _{{\Gamma _k}}}{N_1} = {\Gamma _4}{}_I^CRd - {\left[ {{}^G{p_f} - {}^G{p_{{I_1}}}} \right]_ \times }{}_I^GRd\\
     = {\left[ {{}^G{p_f} - {}^G{p_{{I_k}}}} \right]_ \times }{}_I^GRd - {\left[ {{}^G{p_f} - {}^G{p_{{I_1}}}} \right]_ \times }{}_I^GRd\\
     =  - {\left[ {{}^G{p_{{I_k}}} - {}^G{p_{{I_1}}}} \right]_ \times }{}_I^GRd
    \end{array}
\end{equation}

One geometric relationship can be utilized
\begin{equation}
    {}^G{p_{{I_k}}} = {}^G{p_{{I_1}}} + {}_{{I_1}}^GR{}^{{I_1}}{p_{{I_k}}} = {}^G{p_{{I_1}}} + {}_I^GR{}^{{I_1}}{p_{{I_k}}}
\end{equation}

Subsequently
\begin{equation}
    \begin{array}{l}
    {\Xi _{{\Gamma _k}}}{N_1} =  - {\left[ {{}^G{p_{{I_k}}} - {}^G{p_{{I_1}}}} \right]_ \times }{}_I^GRd\\
     =  - {\left[ {{}_I^GR{}^{{I_1}}{p_{{I_k}}}} \right]_ \times }{}_I^GRd\\
     =  - {}_I^GR{\left[ {{}^{{I_1}}{p_{{I_k}}}} \right]_ \times }{}_I^G{R^T}{}_I^GRd\\
     =  - {}_I^GR{\left[ {{}^{{I_1}}{p_{{I_k}}}} \right]_ \times }d
     = 0
    \end{array}
\end{equation}

Finally
\begin{equation}
     \Rightarrow {M_k}{N_1} = {\Xi _k}{\Xi _{{\Gamma _k}}}{N_1} = 0
\end{equation}

Hence, ${N_1}$ belongs to the right null space of ${M_k}$. ${N_1}$ indicates that the unobservable directions of ${}_I^CR$ are dependent on the non-zero components of ${}_I^CRd$.
\end{proof}

\begin{lemma} \label{lemma2}
If pure VIO system undergoes pure translational straight line motion with constant velocity, the 3DoF of ${}_I^CR$ are all unobservable. The corresponding right null space of ${M_k}$ is
\begin{equation}
    {N_2} = \left[ {\begin{array}{*{20}{c}}
    {{}_G^IR}\\
    {{0_3}}\\
    {{0_3}}\\
    { - {}_G^IR{{\left[ {{}^Gg} \right]}_ \times }}\\
    {{0_3}}\\
    { - {}_I^CR{}_G^IR}\\
    {{0_3}}
    \end{array}} \right]
\end{equation}
\end{lemma}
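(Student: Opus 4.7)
The plan is to verify directly that $N_2$ lies in the right null space of $M_k = \Xi_k \Xi_{\Gamma_k}$, so that it suffices to show $\Xi_{\Gamma_k} N_2 = 0$, analogously to the proof of \cref{lemma1}. The fact that $N_2$ consists of three independent columns (since ${}_G^IR$ is orthogonal and therefore of full column rank) will then immediately upgrade the conclusion to full 3DoF unobservability of ${}_I^CR$.

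First I would record the two kinematic simplifications available in this regime. The pure-translation assumption again gives ${}_{I_k}^GR = {}_I^GR$ for all $k$, so $\Gamma_3 = \tfrac{1}{2}{}_I^GR\,\delta t_k^2$ as in \eqref{eq:I54}. The additional constant-velocity assumption gives ${}^Gp_{I_k} = {}^Gp_{I_1} + {}^Gv_{I_1}\delta t_k$, which lets me rewrite the bracket inside $\Gamma_1$ as $\bigl[{}^Gp_f - {}^Gp_{I_k} + \tfrac{1}{2}{}^Gg\,\delta t_k^2\bigr]_\times$. These two observations are what distinguish the present lemma from \cref{lemma1}; they are also the only places the constant-velocity hypothesis enters.

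Next I would read off $\Xi_{\Gamma_k} N_2$ block by block. Only three columns of $\Xi_{\Gamma_k}$ meet non-zero blocks of $N_2$: the orientation column contributes $\Gamma_1\cdot{}_G^IR$, the accelerometer-bias column contributes $\Gamma_3\cdot(-{}_G^IR[{}^Gg]_\times)$, and the rotational-calibration column contributes $\Gamma_4\cdot(-{}_I^CR{}_G^IR)$. Using ${}_I^GR{}_G^IR = I_3$ and ${}_C^IR{}_I^CR = I_3$, these collapse respectively to $[{}^Gp_f - {}^Gp_{I_k} + \tfrac{1}{2}{}^Gg\,\delta t_k^2]_\times$, $-\tfrac{1}{2}\delta t_k^2[{}^Gg]_\times$, and $-[{}^Gp_f - {}^Gp_{I_k}]_\times$. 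By linearity of $[\cdot]_\times$ the three terms cancel, giving $\Xi_{\Gamma_k} N_2 = 0$ and hence $M_k N_2 = 0$.

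The main obstacle is simply the bookkeeping across the three block contributions: one must apply ${}_G^IR$ (not ${}_I^GR$) on the right of each block in $N_2$ so that the rotations compose to the identity cleanly, and one must recognise that the gravitational term introduced into $\Gamma_1$ by the constant-velocity substitution is exactly what is needed to be cancelled by the accelerometer-bias block. Once this alignment is correctly set up, the cancellation is algebraically immediate and requires no further machinery beyond the two kinematic identities collected at the start.
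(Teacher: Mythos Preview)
Your proposal is correct and follows essentially the same route as the paper's proof: both verify $\Xi_{\Gamma_k}N_2=0$ by reading off the three nonzero block contributions $\Gamma_1{}_G^IR$, $-\Gamma_3{}_G^IR[{}^Gg]_\times$, $-\Gamma_4{}_I^CR{}_G^IR$, collapsing the rotations to identities, and then cancelling via the constant-velocity identity ${}^Gp_{I_k}={}^Gp_{I_1}+{}^Gv_{I_1}\delta t_k$. The only cosmetic difference is that you substitute the constant-velocity relation into $\Gamma_1$ at the outset, whereas the paper carries the original bracket through and applies the relation at the very end; your added remark that ${}_G^IR$ has full column rank (hence three independent null directions) makes explicit what the paper leaves implicit.
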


\begin{proof}
Straight line motion with constant velocity indicates the following geometric constraint
\begin{equation}
    {}^G{p_{{I_k}}} = {}^G{p_{{I_1}}} + {}^G{v_{{I_1}}}\delta {t_k}
\end{equation}

Given the above constraint, we first verify that ${N_2}$ belongs to the right null space of ${\Xi _{{\Gamma _k}}}$.
\begin{equation}
    \begin{array}{l}
    {\Xi _{{\Gamma _k}}}{N_2} = {\Gamma _1}{}_G^IR - {\Gamma _3}{}_G^IR{\left[ {{}^Gg} \right]_ \times } - {\Gamma _4}{}_I^CR{}_G^IR\\
     = {\left[ {{}^G{p_f} - {}^G{p_{{I_1}}} - {}^G{v_{{I_1}}}\delta {t_k} + \frac{1}{2}{}^Gg\delta t_k^2} \right]_ \times }\\
     {\quad} - \frac{1}{2}{\left[ {{}^Gg} \right]_ \times }\delta t_k^2 - {\left[ {{}^G{p_f} - {}^G{p_{{I_k}}}} \right]_ \times }\\
     = {\left[ {{}^G{p_{{I_k}}} - {}^G{p_{{I_1}}} - {}^G{v_{{I_1}}}\delta {t_k}} \right]_ \times }
     = 0
    \end{array}
\end{equation}

Finally
\begin{equation}
     \Rightarrow {M_k}{N_2} = {\Xi _k}{\Xi _{{\Gamma _k}}}{N_2} = 0
\end{equation}

Hence, ${N_2}$ belongs to the right null space of ${M_k}$. ${N_2}$ indicates that the 3DoF of ${}_I^CR$ are all unobservable.
\end{proof}

\begin{remark}
We note that the rotational extrinsic parameter ${}_I^CR$ has at least one degree of freedom that is unobservable when the platform undergoes pure translational straight line motion. More specifically, when moving with constant velocity, the 3 degrees of freedom of ${}_I^CR$ are completely unobservable. When moving with variable velocity, at least one degree of freedom is unobservable as $\left\| {{}_I^CRd} \right\| \ne 0$. 
\end{remark}

\section{Observability Investigation for Global-pose aided VIO}

Like \cite{yang2019degenerate}, the observability of rotational extrinsic parameter is also discussed in the configuration of global-pose aided VIO. Our conclusion is different from \cite{yang2019degenerate}.
Referring to equation (40) of \cite{yang2019degenerate}, the observability matrix is
\begin{equation}
    \scalebox{0.95}{$
    \begin{array}{l}
    M_k^{\left( g \right)} = \Xi _k^{\left( g \right)}\Xi _{{\Gamma _k}}^{\left( g \right)}\\
    \Xi _{{\Gamma _k}}^{\left( g \right)} = \left[ {\begin{array}{*{20}{c}}
    {{\Gamma _1}}&{{\Gamma _2}}&{ - {I_3}\delta {t_k}}&{{\Gamma _3}}&{ - {I_3}}&{{\Gamma _4}}&{{I_3}}\\
    {{\Phi _{I11}}}&{{\Phi _{I12}}}&{{0_3}}&{{0_3}}&{{0_3}}&{{0_3}}&{{0_3}}\\
    {{\Phi _{I51}}}&{{\Phi _{I52}}}&{{\Phi _{I53}}}&{{\Phi _{I54}}}&{{I_3}}&{{0_3}}&{{0_3}}
    \end{array}} \right]
    \end{array}
    $}
\end{equation}

The last two rows of $\Xi _{{\Gamma _k}}^{\left( g \right)}$ in \cite{yang2019degenerate} is incorrect. We have corrected it by multiplying the measurement Jacobian matrix with the state transition matrix. Detailed derivations are provided in \cref{sec: additional_correction} of supplementary material \cite{song2025observability}.

\begin{lemma} \label{lemma3}
If global-pose aided VIO system undergoes pure translational straight line motion, the unobservable directions of ${}_I^CR$ depend on the projection of $d$ in the camera frame $\{ C\} $. The corresponding right null space of $M_k^{\left( g \right)}$ is ${N_1}$.
\end{lemma}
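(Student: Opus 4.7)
The plan is to reduce the claim to showing $\Xi_{\Gamma_k}^{(g)} N_1 = 0$, since then $M_k^{(g)} N_1 = \Xi_k^{(g)} \Xi_{\Gamma_k}^{(g)} N_1 = 0$ immediately. I would verify this row-by-row against the three block rows of $\Xi_{\Gamma_k}^{(g)}$ displayed in the paper. The key observation, already visible by inspection, is that $N_1$ has a highly sparse block structure: the first five $3\times 1$ blocks (corresponding to ${}_G^I q$, $b_g$, ${}^G v_I$, $b_a$, ${}^G p_I$) all vanish, and only the blocks attached to ${}_I^C R$ and ${}^G p_f$ carry nonzero entries, namely ${}_I^C R d$ and $-[{}^G p_f - {}^G p_{I_1}]_\times\, {}_I^G R d$.

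The first block row of $\Xi_{\Gamma_k}^{(g)}$ coincides exactly with $\Xi_{\Gamma_k}$ from the pure VIO setting. The straight-line constraint $[{}^{I_1} p_{I_k}]_\times d = 0$, together with the rigid-body identity ${}^G p_{I_k} - {}^G p_{I_1} = {}_I^G R\, {}^{I_1} p_{I_k}$ used in the proof of Lemma~1, are purely kinematic and do not invoke the global-pose aiding. Hence the computation from Lemma~1 transfers verbatim and yields zero on this row.

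For the remaining two rows I would lean entirely on block-sparsity. The second row $[\Phi_{I11},\, \Phi_{I12},\, 0_3,\, 0_3,\, 0_3,\, 0_3,\, 0_3]$ has zeros in the sixth and seventh block columns, so the only potentially nonzero products, $\Phi_{I11}\cdot 0$ and $\Phi_{I12}\cdot 0$, still vanish because the corresponding blocks of $N_1$ are zero. The third row $[\Phi_{I51},\, \Phi_{I52},\, \Phi_{I53},\, \Phi_{I54},\, I_3,\, 0_3,\, 0_3]$ likewise has $0_3$ precisely in the two columns where $N_1$ is nonzero, so every surviving product is again of the form (matrix)$\cdot 0$. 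Both extra rows therefore annihilate $N_1$ automatically.

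I do not expect a serious obstacle; the whole argument is essentially an alignment check between the zero blocks of $\Xi_{\Gamma_k}^{(g)}$ and the zero blocks of $N_1$, combined with the Lemma~1 calculation. The one piece of care required is to use the \emph{corrected} form of $\Xi_{\Gamma_k}^{(g)}$ derived in the supplementary material, because the mismatch pattern on which the argument pivots only appears once the last two rows are written correctly. Having done this, the interpretation also carries over: at least one degree of freedom of ${}_I^C R$ is unobservable whenever ${}_I^C R d \neq 0$, exactly as in the pure VIO case, which is consistent with the entries of \cref{table_Correction}.
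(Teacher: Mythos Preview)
Your proposal is correct and follows essentially the same approach as the paper: both reduce to showing $\Xi_{\Gamma_k}^{(g)} N_1 = 0$, invoke Lemma~1 for the first block row, and use the block-sparsity alignment between $N_1$ and the last two rows of $\Xi_{\Gamma_k}^{(g)}$ to kill those rows trivially. The paper simply writes out the product and reads off the zeros, whereas you spell out the sparsity argument more explicitly, but the underlying reasoning is identical.
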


\begin{proof}
A naive way of finding the corresponding right null space for $M_k^{\left( g \right)}$ is to test the product of $\Xi _{{\Gamma _k}}^{\left( g \right)}$ and ${N_1}$
\begin{equation}
    \Xi _{{\Gamma _k}}^{\left( g \right)}{N_1} = \left[ {\begin{array}{*{20}{c}}
    {{\Gamma _4}{}_I^CRd - {{\left[ {{}^G{p_f} - {}^G{p_{{I_1}}}} \right]}_ \times }{}_I^GRd}\\
    {{0_{3 \times 1}}}\\
    {{0_{3 \times 1}}}
    \end{array}} \right]
\end{equation}

According to \cref{lemma1}
\begin{equation}
    \Xi _{{\Gamma _k}}^{\left( g \right)}{N_1} = 0
\end{equation}

Finally
\begin{equation}
     \Rightarrow M_k^{\left( g \right)}{N_1} = \Xi _k^{\left( g \right)}\Xi _{{\Gamma _k}}^{\left( g \right)}{N_1} = 0
\end{equation}

Hence, ${N_1}$ belongs to the right null space of $M_k^{\left( g \right)}$. ${N_1}$ indicates that the unobservable directions of ${}_I^CR$ are dependent on the non-zero components of ${}_I^CRd$.
\end{proof}

\begin{lemma} \label{lemma4}
If global-pose aided VIO system undergoes pure translational straight line motion with constant velocity, the unobservable directions of ${}_I^CR$ depend on the projection of $d$ in the camera frame $\{ C\} $. The corresponding right null space of $M_k^{\left( g \right)}$ is still ${N_1}$.
\end{lemma}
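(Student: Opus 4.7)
The plan is to prove Lemma 4 in two stages that mirror the structure of the previous lemmas: first establish that $N_1$ remains a null direction of $M_k^{(g)}$, then argue that, in contrast with Lemma 2, the alternative null vector $N_2$ is no longer annihilated once the global-pose rows are appended. The overall message is that the global-pose measurement ``kills'' the extra unobservability generated by constant velocity but leaves the geometric unobservability along $d$ intact.

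For the first stage, I would simply observe that pure translational straight line motion with constant velocity is a strict specialization of pure translational straight line motion. Lemma 3 has already verified $\Xi_{\Gamma_k}^{(g)} N_1 = 0$ under the weaker assumption by reducing it to the block computation in Lemma 1; since the additional constant-velocity constraint is never invoked in that argument, the same identity persists here. Consequently $M_k^{(g)} N_1 = \Xi_k^{(g)} \Xi_{\Gamma_k}^{(g)} N_1 = 0$, and $N_1$ lies in the right null space of $M_k^{(g)}$, preserving the at-least-one unobservable rotational DoF aligned with ${}_I^C R d$.

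For the second stage, I would test $N_2$ against $\Xi_{\Gamma_k}^{(g)}$. The top block row reproduces exactly the cancellation carried out in Lemma 2 and therefore vanishes, so any obstruction must come from the two newly added block rows introduced by the global-pose measurement. Focusing on the second block row, its product with $N_2$ reduces to $\Phi_{I11} \cdot {}_G^I R$. Because the platform undergoes no rotation, the orientation state transition block collapses to $\Phi_{I11} = I_3$, so the product equals ${}_G^I R \ne 0$. Hence $\Xi_{\Gamma_k}^{(g)} N_2 \ne 0$, showing that $N_2$ is no longer in the null space. This matches the intuition that a direct global orientation measurement resolves the yaw-gravity ambiguity that $N_2$ encodes, so only the $N_1$ direction survives.

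The main obstacle I expect is justifying the precise forms of $\Phi_{I11}$ (and, if needed, $\Phi_{I51}$) in the corrected $\Xi_{\Gamma_k}^{(g)}$, since the paper explicitly flags that the last two block rows of the original matrix in \cite{yang2019degenerate} were wrong and had to be rederived in the supplementary material. A clean proof requires recomputing these blocks from the measurement Jacobian times the state transition matrix under zero angular velocity, and confirming that $\Phi_{I11}$ reduces to $I_3$ so that the nonvanishing of $\Xi_{\Gamma_k}^{(g)} N_2$ is unambiguous. Once that transition block is pinned down, the rest of the argument is essentially a recycling of Lemmas 2 and 3.
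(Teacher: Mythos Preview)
Your proposal is correct and follows essentially the same approach as the paper: invoke Lemma~3 to show $N_1$ remains in the null space (constant velocity being a special case of straight-line motion), then check that the second block row $\Phi_{I11}\,{}_G^IR$ of $\Xi_{\Gamma_k}^{(g)} N_2$ is nonzero so that $N_2$ is eliminated. The only cosmetic difference is that the paper appeals to the general fact $\Phi_{I11}\neq 0$ from \cite{hesch2013consistency} rather than specializing to $\Phi_{I11}=I_3$ under zero rotation, but the argument is otherwise identical.
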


\begin{proof}
A naive way of finding the corresponding right null space for $M_k^{\left( g \right)}$ is to test the product of $\Xi _{{\Gamma _k}}^{\left( g \right)}$ and ${N_2}$
\begin{equation}
    \Xi _{{\Gamma _k}}^{\left( g \right)}{N_2} = \left[ {\begin{array}{*{20}{c}}
    {{\Gamma _1}{}_G^IR - {\Gamma _3}{}_G^IR{{\left[ {{}^Gg} \right]}_ \times } - {\Gamma _4}{}_I^CR{}_G^IR}\\
    {{\Phi _{I11}}{}_G^IR}\\
    {{\Phi _{I51}}{}_G^IR - {\Phi _{I54}}{}_G^IR{{\left[ {{}^Gg} \right]}_ \times }}
    \end{array}} \right]
\end{equation}

According to \cref{lemma2}
\begin{equation}
    \Xi _{{\Gamma _k}}^{\left( g \right)}{N_2} = \left[ {\begin{array}{*{20}{c}}
    {{0_3}}\\
    {{\Phi _{I11}}{}_G^IR}\\
    {{\Phi _{I51}}{}_G^IR - {\Phi _{I54}}{}_G^IR{{\left[ {{}^Gg} \right]}_ \times }}
    \end{array}} \right]
\end{equation}

Referring to equation (46) in \cite{hesch2013consistency}, ${\Phi _{I11}} \ne 0$, it is clear that $\Xi _{{\Gamma _k}}^{\left( g \right)}{N_2} \ne 0$. Therefore, the unobservable direction $N_2$ is no longer hold due to the inclusion of global pose measurement.

It is worth noting that \cref{lemma4} is a special case of \cref{lemma3}. Hence, ${N_1}$ still belongs to the right null space of $M_k^{\left( g \right)}$. ${N_1}$ indicates that the unobservable directions of ${}_I^CR$ are dependent on the non-zero components of ${}_I^CRd$.
\end{proof}

\begin{remark}
We note that the rotational extrinsic parameter ${}_I^CR$ has at least one degree of freedom that is unobservable when the platform undergoes pure translational straight line motion, regardless of variable velocity or constant velocity. In the case of constant velocity, the unobservable directions can be decreased with the aides of global pose measurement, compared to the pure VIO configuration. More specifically, in the global-pose aided VIO configuration, the worst case is three degrees of freedom are unobservable, while the best case is only one degree of freedom is unobservable. The difference between our conclusion and \cite{yang2019degenerate} is marked in \cref{table_Correction}.
\end{remark}

\section{Results}

We conduct verification experiments based on Open-VINS \cite{geneva2020openvins}. As this paper focuses on the observability investigation of the rotational extrinsic parameter, we only perform online calibration for the rotational extrinsic parameter and set the translational extrinsic parameter and time offset as true values, referring to our state vector (\cref{eq:state}).

\subsection{Comments on results in \cite{yang2019degenerate}} \label{sec: Comments}

Table I from \cite{yang2019degenerate} show that the rotational extrinsic parameter is observable for pure translational motion. However, we find that \cite{yang2019degenerate} actually did not perform theoretical analysis on the rotational calibration (${}_I^CR$). Besides that, it can be seen from the top subplot of Fig. 2a in \cite{yang2019degenerate}, if the simulation trajectory is a pure translational straight line motion with constant velocity, the calibration result of the rotational extrinsic parameter shows large RMSE (greater than 1 degree). Regarding the inconsistency between observability assertion and simulation result, no ablation experiments were conducted, by calibrating the rotational extrinsic parameter only and turning off the calibration of the translational extrinsic parameter and time offset. Moreover, Section VI of \cite{yang2019degenerate} did not validate the convergence consistency with different initial ${}_I^CR$. Section VII of \cite{yang2019degenerate} missed the verification of pure translational straight line motion in real-world experiments.

\subsection{Numerical Study} \label{sec: Numerical Study}

\begin{table*}
  \caption{Final calibration results of the rotational extrinsic parameter for pure VIO system undergoes pure translational straight line motion with variable velocity. The absolute errors of roll, pitch, and yaw at 60s, are recorded with different perturbations.}
  \centering
  \scalebox{1.0}{
  \begin{tabular}{@{}cccccccccc@{}}
    \toprule
    \multirow{2}{*}{\makecell{Perturbations of\\ (roll, pitch, yaw)}} & \multicolumn{3}{c}{Case-1} & \multicolumn{3}{c}{Case-2} & \multicolumn{3}{c}{Case-3} \\
    \cmidrule(lr){2-4} \cmidrule(lr){5-7} \cmidrule(lr){8-10}
    & {Roll} & {Pitch} & {Yaw} & {Roll} & {Pitch} & {Yaw} & {Roll} & {Pitch} & {Yaw} \\
    \midrule
    (2, -4, -5) & 11.31 & 0.04 & 0.03 & 5.28 & 2.17 & 0.05 & 4.67 & 1.83 & 2.21\\
    (-4, 3, 3) & 0.37 & 0.05 & 0.02 & 2.16 & 0.91 & 0.03 & 4.04 & 1.69 & 1.69\\
    (5, -2, -1) & 13.38 & 0.04 & 0.02 & 7.02 & 2.89 & 0.09 & 1.01 & 0.41 & 0.53\\
    (-1, -5, -3) & 8.96 & 0.03 & 0.02 & 2.67 & 1.10 & 0.01 & 4.99 & 1.94 & 2.23\\
    (3, 0, 1) & 9.55 & 0.04 & 0.01 & 4.60 & 1.89 & 0.06 & 0.40 & 0.19 & 0.20\\
    (1, 2, -4) & 7.34 & 0.06 & 0.04 & 3.52 & 1.44 & 0.05 & 5.98 & 2.44 & 2.63\\
    (0, 5, 2) & 2.51 & 0.05 & 0.02 & 1.60 & 0.64 & 0.07 & 1.22 & 0.57 & 0.52\\
    (-3, 4, 0) & 1.58 & 0.05 & 0.03 & 0.63 & 0.28 & 0.03 & 4.82 & 1.98 & 2.08\\
    (-5, 1, 4) & 0.44 & 0.04 & 0.01 & 3.20 & 1.35 & 0.02 & 6.09 & 2.50 & 2.53\\
    (4, -1, 5) & 9.20 & 0.02 & 0.01 & 4.49 & 1.84 & 0.05 & 1.12 & 0.39 & 0.51\\
    (-2, -3, -2) & 7.36 & 0.04 & 0.02 & 0.77 & 0.30 & 0.01 & 5.89 & 2.34 & 2.56\\
    \hline
    Avg & 6.55 & 0.04 & 0.02 & 3.27 & 1.35 & 0.04 & 3.66 & 1.48 & 1.61\\
    \bottomrule
  \end{tabular}}
  \label{tab: variable velocity-vio}
\end{table*}

\begin{figure*}
  \centering
  \begin{subfigure}{0.96\linewidth}
    \includegraphics[width=\textwidth, height=0.2\textwidth]{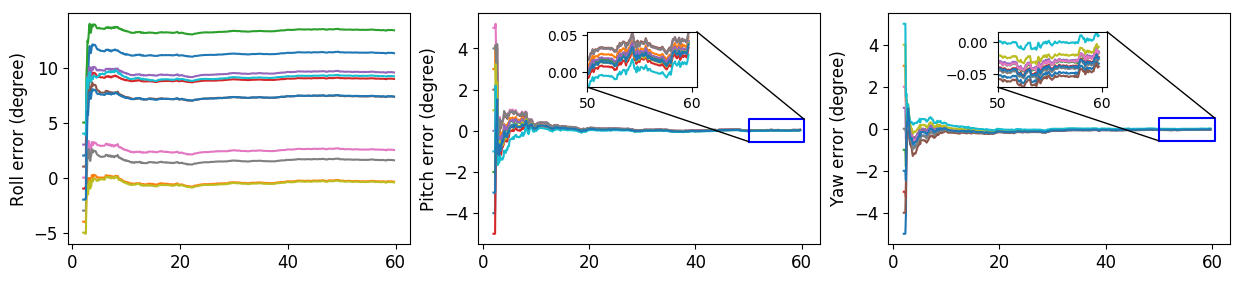}
  \end{subfigure}
  
  \begin{subfigure}{0.96\linewidth}
    \includegraphics[width=\textwidth, height=0.2\textwidth]{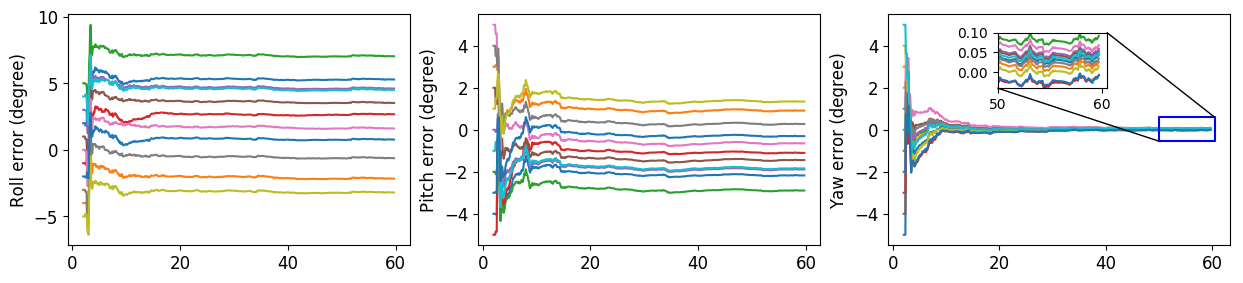}
  \end{subfigure}

  \begin{subfigure}{0.96\linewidth}
    \includegraphics[width=\textwidth, height=0.2\textwidth]{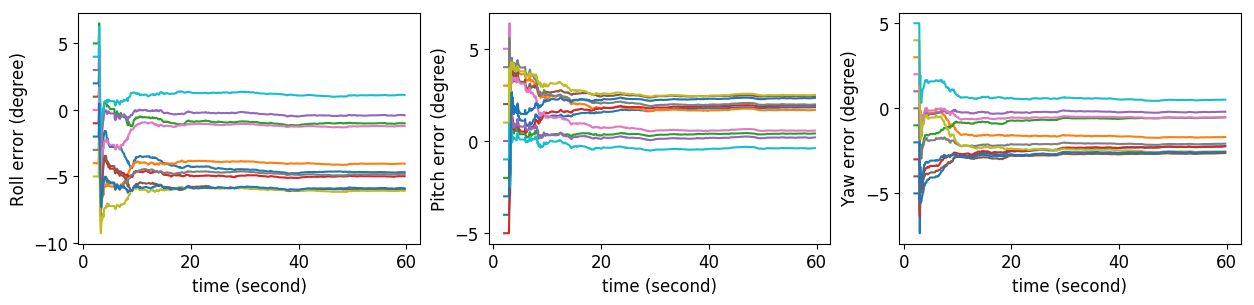}
  \end{subfigure}

  \caption{Calibration results for pure VIO system undergoes pure translational straight line motion with variable velocity. $y$-axis represents errors of the rotational calibration parameter over time respect to different initial guesses. $x$-axis represents time in seconds. Top to bottom corresponds to Case-1 to Case-3 in \cref{sec: Numerical Study}.}
  \label{fig: variable velocity-vio}
  
\end{figure*}

Employing the Open-VINS simulator and importing the desired 6DoF trajectory, realistic multi-sensor data are generated for experiments under two different configurations. For the pure VIO configuration, we generate IMU measurements at 400 Hz and image measurements at 10 Hz. For the global-pose aided VIO configuration, additional 10 Hz global-pose measurements are generated. The global-pose measurement noises are defined as
\begin{equation}
    \begin{array}{l}
    n_{p} \sim {\cal N}\left( {{0_{3 \times 1}},{\sigma_p^2 I_3}} \right), \sigma_p = 0.1m \\
    n_{\theta} \sim {\cal N}\left( {{0_{3 \times 1}},{\sigma_\theta^2 I_3}} \right), \sigma_\theta = 0.1rad
    \end{array}
\end{equation}
where $n_{p}$ and $n_{\theta}$ represent Gaussian noises for global position and orientation measurement, respectively. 

This paper focuses on pure translational straight line motion, therefore the orientation of the input trajectory, ${}_G^IR$, is set as ${I_3}$. To validate the observability assertion summarized in \cref{table_Correction}, two types of straight line motion with different velocity profiles are designed as

\begin{itemize}

\item Trajectory-1: ${}^G{p_I} = {\left[ {\begin{array}{*{20}{c}}
{2\cos \left( {\frac{\pi }{5}t} \right)}&0&0
\end{array}} \right]^T}$.

\item Trajectory-2: ${}^G{p_I} = {\left[ {\begin{array}{*{20}{c}}
{0.5t}&0&0
\end{array}} \right]^T}$.

\end{itemize}

Trajectory-1 corresponds to variable velocity motion, while Trajectory-2 corresponds to constant velocity motion. The direction vector corresponding to both these two trajectories is $d = {\left[ {\begin{array}{*{20}{c}}
1&0&0
\end{array}} \right]^T}$. As the unobservable directions of ${}_I^CR$ may depend on the non-zero components of ${}_I^CRd$, three types of groundtruth ${}_I^CR$ are designed as

\begin{itemize}

\item Case-1: 
\\
\scalebox{0.9}{${}_I^CR = \left[ {\begin{array}{*{20}{c}}
1&0&0\\
0&1&0\\
0&0&1
\end{array}} \right],{}_I^CRd = \left[ {\begin{array}{*{20}{c}}
1\\
0\\
0
\end{array}} \right]$}.

\item Case-2: 
\\
\scalebox{0.9}{${}_I^CR = \left[ {\begin{array}{*{20}{c}}
0.707&0.707&0\\
-0.707&0.707&0\\
0&0&1
\end{array}} \right],{}_I^CRd = \left[ {\begin{array}{*{20}{c}}
0.707\\
-0.707\\
0
\end{array}} \right]$}.

\item Case-3: 
\\
\scalebox{0.9}{${}_I^CR = \left[ {\begin{array}{*{20}{c}}
0.5&0.707&-0.5\\
-0.5&0.707&0.5\\
0.707&0&0.707
\end{array}} \right],{}_I^CRd = \left[ {\begin{array}{*{20}{c}}
0.5\\
-0.5\\
0.707
\end{array}} \right]$}.

\end{itemize}

For each case, we initialize ${}_I^CR$ by adding different perturbations to the three degrees of freedom of ${}_I^CR$ (roll, pitch, and yaw), and collect calibration error with respect to groundtruth ${}_I^CR$. The range of perturbation is $\left[ { - {{5.0}^ \circ },{{5.0}^ \circ }} \right]$. If a certain degree of freedom is observable, it should be robust to different perturbations, namely, the calibration error should consistently converge to 0. On the contrary, if it is unobservable, the calibration error can not converge to 0 and is expected to be sensitive to the initial value.

Firstly, we analyze the calibration results for Case-1 of Trajectory-1 in the pure VIO configuration, as shown in the \cref{tab: variable velocity-vio}. Pitch and yaw exhibit observable characteristic, while roll not. This is because the non-zero component of ${}_I^CRd$ corresponds to roll. For Case-2, yaw exhibits observable characteristic, while roll and pitch not. This is because non-zero components of ${}_I^CRd$ correspond to roll and pitch. For Case-3, roll, pitch, and yaw all exhibit unobservable characteristic. This is because none of the three components of ${}_I^CRd$ are zero. The calibration results over time are shown in the \cref{fig: variable velocity-vio}. Similar analysis also applies to different combinations of  configurations and trajectories, please refer to \cref{table_Numerical} and \cref{sec: additional_numerical_results} of supplementary material \cite{song2025observability} for other results. These results successfully validate that our novel observability conclusions are correct. Overall, observable degree of freedom shows deterministic behavior, i.e. converging to groundtruth over time, while unobservable degree of freedom exhibits unpredictable behavior.

\begin{table*}
\caption{Numerical Study Results for Rotational Calibration of (Global-pose aided) VIO under Straight Line Motion.}
\label{table_Numerical}
\begin{center}
\scalebox{1.0}{
\begin{tabular}{|c|c|c|c|c|}
\hline
\multirow{2}{*}{\makecell{Motion}} & \multicolumn{2}{c|}{Pure VIO} & \multicolumn{2}{c|}{Global-pose aided VIO}\\
\cline{2-5}                 & calibration results & conclusion & calibration results & conclusion\\
\hline
\makecell{Trajectory-1 \\in \cref{sec: Numerical Study}} & \cref{tab: variable velocity-vio} and \cref{fig: variable velocity-vio} & at least one unobservable DoF  & \makecell{\cref{tab: variable velocity-global vio} and \cref{fig: variable velocity-global vio} in \\supplementary material \cite{song2025observability}} & at least one unobservable DoF\\
\hline
\makecell{Trajectory-2 \\in \cref{sec: Numerical Study}} & \makecell{\cref{tab: constant velocity-vio} and \cref{fig: constant velocity-vio} in \\supplementary material \cite{song2025observability}} & fully unobservable & \makecell{\cref{tab: constant velocity-global vio} and \cref{fig: constant velocity-global vio} in \\supplementary material \cite{song2025observability}} & at least one unobservable DoF\\
\hline
\end{tabular}}
\end{center}
\end{table*}

\subsection{Real-world Dataset}

\begin{figure}[htbp]
  \centering
    \begin{subfigure}{0.23\textwidth}
        \centering
        \includegraphics[width=\textwidth, height=0.8\textwidth]{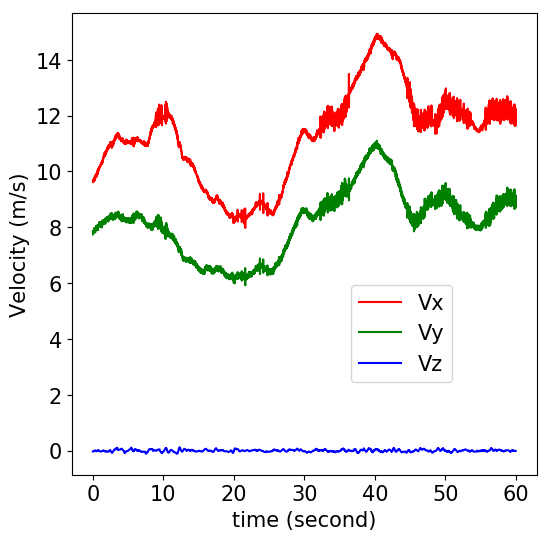}
        \caption{}
        \label{fig: kaist34 velocity}
    \end{subfigure}
    \hfill
    \begin{subfigure}{0.23\textwidth}
        \centering
          \includegraphics[width=\textwidth, height=0.8\textwidth]{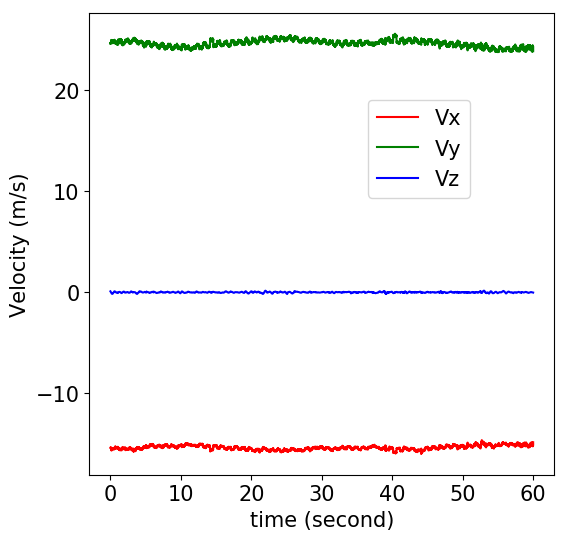}
          \caption{}
          \label{fig: kaist22 velocity}
    \end{subfigure}
  \caption{Velocity profiles of Urban34 (a) and Urban22 (b).}
  \label{fig: kaist velocity profile}
\end{figure}

Straight line motions are quite common in real-world scenarios. On one hand, straight line cruise is the most efficient and energy-saving trajectory for most robot applications. On the other hand, substantial artificial scenarios have specific constraints on motion, such as applications in agriculture, warehousing, logistics, and transportation.

The KAIST urban dataset \cite{jeong2019complex} contains the driving scenario on the highway, as shown in the \cref{fig: kaist picture}. Urban34 and Urban22 from this dataset are leveraged to confirm our observability finding, as these two sequences represent variable velocity motion and constant velocity motion, respectively. The vehicle used to collect data follows the same lane during driving, so its trajectory can be regarded as a pure translational straight line. Corresponding ${}_I^CRd$ is
$\begin{array}{l}
    \qquad\qquad\qquad {}_I^CRd = \left[ {\begin{array}{*{20}{c}}
    -0.00413\\
    -0.01966\\
    0.99980
    \end{array}} \right]
\end{array}$

The velocity curve of Urban34 sequence (see \cref{fig: kaist34 velocity}) is variable over time. \cref{fig: urban34 results} shows the calibration results with the pure VIO configuration and the global-pose aided VIO configuration.
Roll and pitch exhibit observable characteristic, while yaw not. This is because the non-zero component of ${}_I^CRd$ is dominated by the yaw component (0.99980).
The velocity curve of Urban22 sequence (see \cref{fig: kaist22 velocity}) is approximately constant. \cref{fig: urban22 results} shows the calibration results of Urban22.
In the pure VIO configuration, roll, pitch, and yaw all exhibit unobservable characteristic due to constant velocity motion. In the global-pose aided VIO configuration, unobservable degrees of freedom are reduced from 3 to 1 (yaw).
Interestingly, the convergence error of pitch is larger than that of roll, which can be attributed to the fact that the absolute value of pitch component (0.01966) is larger than that of roll (0.00413). 

Furthermore, we evaluate the localization accuracy with calibration (w. calib) and without calibration (wo. calib), under different perturbations on the rotational extrinsic parameter. Since real-world data is more sensitive than simulation data, the perturbation amplitude is reduced to half of its value listed in the \cref{tab: variable velocity-vio}. The Absolute Trajectory Error (ATE) results are reported in \cref{tab:ate_vio} and \cref{tab:ate_global_vio}. 

In the pure VIO configuration (\cref{tab:ate_vio}), calibration significantly improves the localization accuracy for Urban34, as model error from two degrees of freedom (roll and pitch) of the rotational calibration parameter can be corrected to near 0, thanks to online calibration (see top of \cref{fig: urban34 results}). Urban22 exhibits large localization error as scale becomes unobservable under constant velocity motion \cite{delaune2021range}.
And it is observed that performing calibration further degrades the localization due to the fully unobservable property of the rotational calibration parameter (see top of \cref{fig: urban22 results}). In the global-pose aided VIO configuration (\cref{tab:ate_global_vio}), the localization accuracy is mainly dominated by global pose measurements, thus the calibration of rotational extrinsic parameter has negligible impact on the accuracy.

\begin{remark}
If the calibration parameter is observable, online calibration typically brings positive benefits to localization \cite{song2024gps}. However, if it is unobservable, the impact of calibration on localization is unpredictable (negative, no impact or positive). In other words, we cannot determine the observability of the calibration parameter from localization accuracy.
\end{remark}

\begin{figure*}
  \centering
  \begin{subfigure}{0.96\linewidth}
    \includegraphics[width=\textwidth, height=0.2\textwidth]{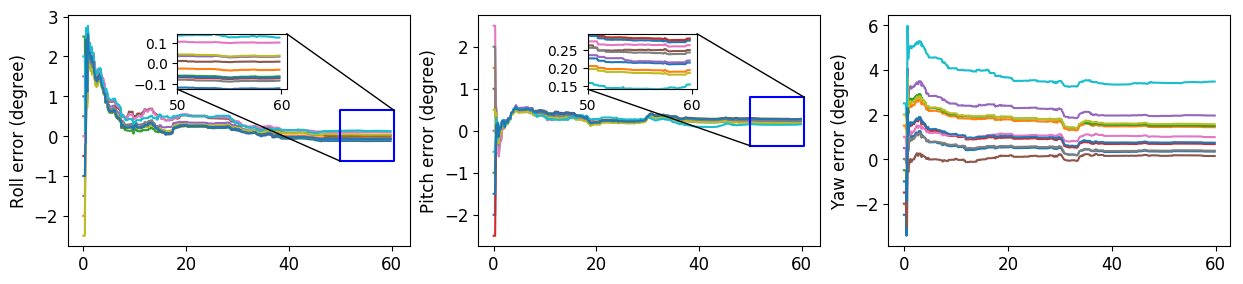}
  \end{subfigure}
  
  \begin{subfigure}{0.96\linewidth}
    \includegraphics[width=\textwidth, height=0.2\textwidth]{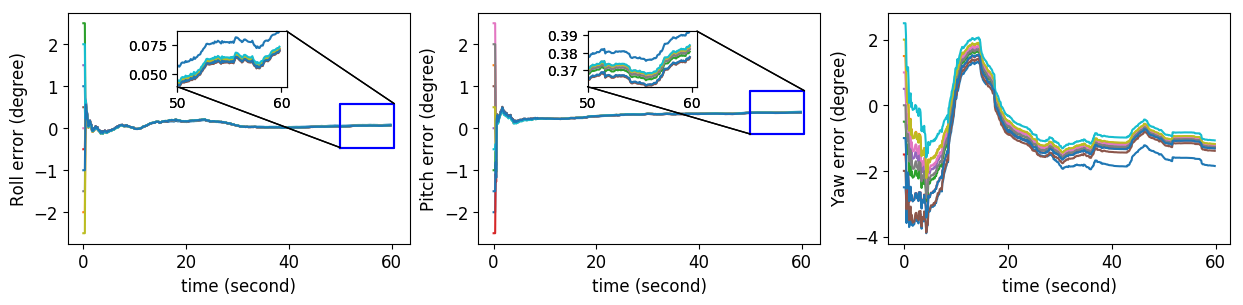}
  \end{subfigure}

  \caption{Calibration results for Urban34. Top: Results for pure VIO system. Bottom: Results for global-pose aided VIO system. $y$-axis represents errors of the rotational calibration parameter over time respect to different initial guesses. $x$-axis represents time in seconds.}
  \label{fig: urban34 results}
  
\end{figure*}

\begin{figure*}
  \centering
  \begin{subfigure}{0.96\linewidth}
    \includegraphics[width=\textwidth, height=0.2\textwidth]{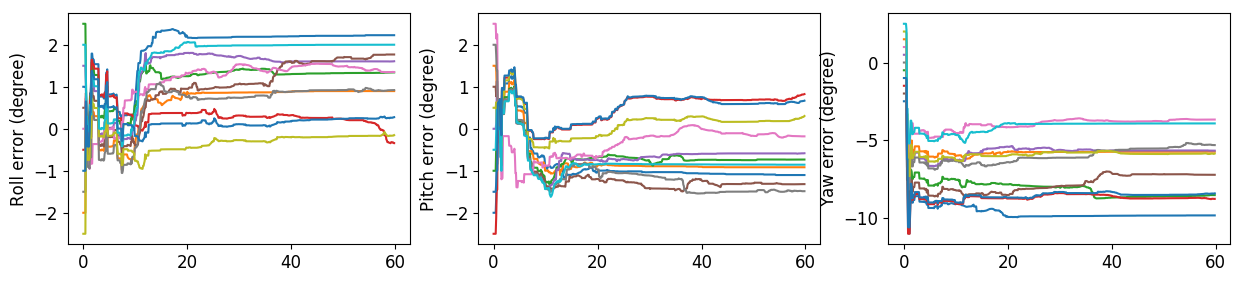}
  \end{subfigure}
  
  \begin{subfigure}{0.96\linewidth}
    \includegraphics[width=\textwidth, height=0.2\textwidth]{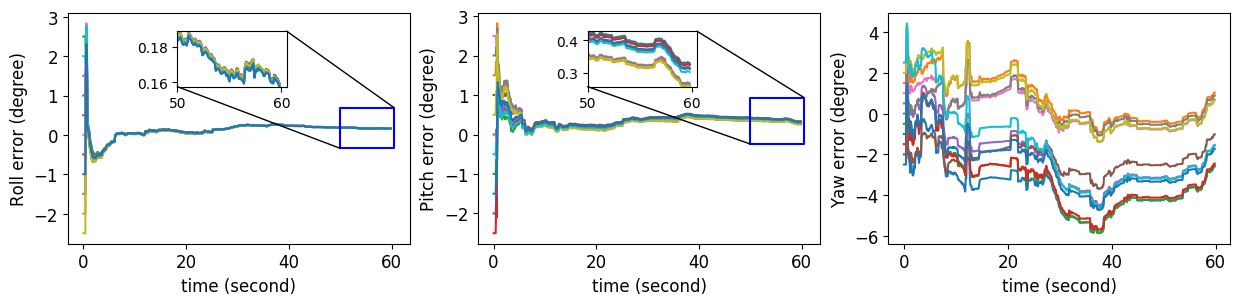}
  \end{subfigure}

  \caption{Calibration results for Urban22. Top: Results for pure VIO system. Bottom: Results for global-pose aided VIO system. $y$-axis represents errors of the rotational calibration parameter over time respect to different initial guesses. $x$-axis represents time in seconds.}
  \label{fig: urban22 results}
  
\end{figure*}

\begin{table}
  \caption{ATE (meter) Comparison for Pure VIO.}
  \centering
  \scalebox{1.0}{
  \begin{tabular}{@{}ccccc@{}}
    \toprule
    \multirow{2}{*}{Perturbations} & \multicolumn{2}{c}{Urban34} & \multicolumn{2}{c}{Urban22} \\
    \cmidrule(lr){2-3} \cmidrule(lr){4-5}
    & {w. calib} & {wo. calib} & {w. calib} & {wo. calib} \\
    \midrule
    (1.0, -2.0, -2.5) & 2.49 & 13.27 & 257.52 & 212.16 \\
    (-2.0, 1.5, 1.5) & 4.24 & 108.88 & 258.47 & 88.39 \\
    (2.5, -1.0, -0.5) & 2.90 & 106.31 & 268.02 & 308.66 \\
    (-0.5, -2.5, -1.5) & 2.38 & 53.36 & 125.40 & 90.07 \\
    (1.5, 0.0, 0.5) & 10.23 & 36.76 & 282.57 & 119.22 \\
    (0.5, 1.0, -2.0) & 9.78 & 5.31 & 238.12 & 102.42 \\
    (0.0, 2.5, 1.0) & 7.65 & 49.58 & 266.51 & 128.76 \\
    (-1.5, 2.0, 0.0) & 2.30 & 80.54 & 198.29 & 54.08 \\
    (-2.5, 0.5, 2.0) & 5.54 & 112.60 & 78.57 & 97.87 \\
    (2.0, -0.5, 2.5) & 39.81 & 127.98 & 248.98 & 335.19 \\
    (-1.0, -1.5, -1.0) & 2.37 & 45.99 & 94.43 & 56.92 \\
    \hline
    Avg & 8.15 & 67.33 & 210.63 & 144.89\\
    \bottomrule
  \end{tabular}}
  \label{tab:ate_vio}
\end{table}


\begin{table}
  \caption{ATE (meter) Comparison for Global-pose aided VIO.}
  \centering
  \scalebox{1.0}{
  \begin{tabular}{@{}ccccc@{}}
    \toprule
    \multirow{2}{*}{Perturbations} & \multicolumn{2}{c}{Urban34} & \multicolumn{2}{c}{Urban22} \\
    \cmidrule(lr){2-3} \cmidrule(lr){4-5}
    & {w. calib} & {wo. calib} & {w. calib} & {wo. calib} \\
    \midrule
    Avg & 0.41 & 0.42 & 0.19 & 0.19\\
    \bottomrule
  \end{tabular}}
  \label{tab:ate_global_vio}
\end{table}

\section{Conclusion}

We investigate the observability from \cite{yang2019degenerate, yang2023online}, and prove that the common-seen pure translational straight line motion can lead to the unobservability of the rotational extrinsic parameter between IMU and camera (at least one degree of freedom).
Our novel finding is carefully verified through rigorous theory, numerical study, and real-world experiment. 
This finding makes up for the shortcomings of the existing research conclusions.
When the observability conclusion is inconsistent with the numerical study results (see our comments in \cref{sec: Comments}), we recommend:

\begin{itemize}

\item Perform ablation experiments to eliminate the influence of other calibration parameters.

\item Try different initial values to test the convergence consistency of the interested calibration parameter.

\end{itemize}

Mathematical derivations of this paper and \cite{yang2019degenerate} require delicate search for the null space of the observability matrix. And this process is case by case, which prompts us a research question for future work. Is there an automatic and natural way to find degenerate motion and corresponding unobservable degrees of freedom, thus avoiding potential manual missing or mistake?









\bibliographystyle{ieeetr}
\bibliography{bib}

\clearpage
\setcounter{page}{1}
\maketitlesupplementary

\section{Correction of the observability matrix for global-pose aided VIO} \label{sec: additional_correction}

The observability matrix plays a key role for the observability analysis of a linear or nonlinear state estimator. According to the Section II.E of \cite{yang2019degenerate}, the measurement Jacobian matrix and state transition matrix need to be calculated in advance to construct the observability matrix. For ease of description, recall our state vector (\cref{eq:state})
\begin{equation}
    x = {\left[ {\begin{array}{*{20}{c}}
    {{}_G^I{q^T}}&{b_g^T}&{{}^Gv_I^T}&{b_a^T}&{{}^Gp_I^T}&{{}_I^C{q^T}}&{{}^Gp_f^T}
    \end{array}} \right]^T}
\end{equation}

According to the Section II.D of \cite{yang2019degenerate}, the measurement Jacobian matrix corresponding to global pose measurement can be calculated as
\begin{equation}
    {H_{{V_k}}} = \left[ {\begin{array}{*{20}{c}}
    {{I_3}}&{{0_3}}&{{0_3}}&{{0_3}}&{{0_3}}&{{0_3}}&{{0_3}}\\
    {{0_3}}&{{0_3}}&{{0_3}}&{{0_3}}&{{I_3}}&{{0_3}}&{{0_3}}
    \end{array}} \right]
\end{equation}

Combining the measurement Jacobian matrix corresponding to visual measurement, ${{H_{{C_k}}}}$, the overall measurement Jacobian matrix can be denoted as
\begin{equation}
    {H_k} = \left[ {\begin{array}{*{20}{c}}
    {{H_{{C_k}}}}\\
    {{H_{{V_k}}}}
    \end{array}} \right]
\end{equation}

Referring to equation (5) of \cite{yang2019degenerate}, the expression of our state transition matrix is
\begin{equation}
    \Phi \left( {k,1} \right) = \left[ {\begin{array}{*{20}{c}}
    {{\Phi _{I11}}}&{{\Phi _{I12}}}&{{0_3}}&{{0_3}}&{{0_3}}&{{0_3}}&{{0_3}}\\
    {{0_3}}&{{I_3}}&{{0_3}}&{{0_3}}&{{0_3}}&{{0_3}}&{{0_3}}\\
    {{\Phi _{I31}}}&{{\Phi _{I32}}}&{{I_3}}&{{\Phi _{I34}}}&{{0_3}}&{{0_3}}&{{0_3}}\\
    {{0_3}}&{{0_3}}&{{0_3}}&{{I_3}}&{{0_3}}&{{0_3}}&{{0_3}}\\
    {{\Phi _{I51}}}&{{\Phi _{I52}}}&{{\Phi _{I53}}}&{{\Phi _{I54}}}&{{I_3}}&{{0_3}}&{{0_3}}\\
    {{0_3}}&{{0_3}}&{{0_3}}&{{0_3}}&{{0_3}}&{{I_3}}&{{0_3}}\\
    {{0_3}}&{{0_3}}&{{0_3}}&{{0_3}}&{{0_3}}&{{0_3}}&{{I_3}}
    \end{array}} \right]
\end{equation}

Finally, the observability matrix for global-pose aided VIO can be constructed by multiplying the measurement Jacobian matrix with the state transition matrix
\begin{equation}
    \begin{array}{l}
    M_k^{\left( g \right)} = {H_k}\Phi \left( {k,1} \right)\\
     = \left[ {\begin{array}{*{20}{c}}
    {{H_{{C_k}}}\Phi \left( {k,1} \right)}\\
    {{H_{{V_k}}}\Phi \left( {k,1} \right)}
    \end{array}} \right]\\[10pt]
     = \left[ {\begin{array}{*{20}{c}}
    {{\Xi _k}{\Xi _{{\Gamma _k}}}}\\
    {{H_{{V_k}}}\Phi \left( {k,1} \right)}
    \end{array}} \right]\\[10pt]
     = \left[ {\begin{array}{*{20}{c}}
    {{\Xi _k}}&0\\
    0&{{I_6}}
    \end{array}} \right]\\
    \quad \times \left[ {\begin{array}{*{20}{c}}
    {{\Gamma _1}}&{{\Gamma _2}}&{ - {I_3}\delta {t_k}}&{{\Gamma _3}}&{ - {I_3}}&{{\Gamma _4}}&{{I_3}}\\
    {{\Phi _{I11}}}&{{\Phi _{I12}}}&{{0_3}}&{{0_3}}&{{0_3}}&{{0_3}}&{{0_3}}\\
    {{\Phi _{I51}}}&{{\Phi _{I52}}}&{{\Phi _{I53}}}&{{\Phi _{I54}}}&{{I_3}}&{{0_3}}&{{0_3}}
    \end{array}} \right]
    \end{array}
\end{equation}

This completes the correction for the equation (40) of \cite{yang2019degenerate}. 

\section{Additional results on numerical study} \label{sec: additional_numerical_results}

In this section, we will analyze additional calibration results described in \cref{table_Numerical} to complete the validation of our observable conclusions.

\cref{tab: variable velocity-global vio} shows the final calibration results of Trajectory-1 in the global-pose aided VIO configuration. For Case-1, pitch and yaw exhibit observable characteristic, while roll not. This is because the non-zero component of ${}_I^CRd$ corresponds to roll. For Case-2, yaw exhibits observable characteristic, while roll and pitch not. This is because non-zero components of ${}_I^CRd$ correspond to roll and pitch. For Case-3, roll, pitch, and yaw all exhibit unobservable characteristic. This is because none of the three components of ${}_I^CRd$ are zero. The calibration results over time are shown in the \cref{fig: variable velocity-global vio}.

\cref{tab: constant velocity-vio} shows the final calibration results of Trajectory-2 in the pure VIO configuration. For Case-1, Case-2, 
 and Case-3, roll, pitch, and yaw all exhibit unobservable characteristic. This can be explained by \cref{lemma2}, which indicates constant velocity motion lead to the fully unobservable property of the rotational extrinsic parameter. The calibration results over time are shown in the \cref{fig: constant velocity-vio}.

 \cref{tab: constant velocity-global vio} shows the final calibration results of Trajectory-2 in the global-pose aided VIO configuration. We can still observe that the convergence of the rotational extrinsic parameter, depends on which components of ${}_I^CRd$ are 0. The calibration results over time are shown in the \cref{fig: constant velocity-global vio}.

 These calibration results, and the corresponding observability conclusion they supported, are summarized in \cref{table_Numerical}. Extensive experimental results demonstrate the correctness of our novel theoretical finding (see \cref{table_Correction}).

\begin{table*}
  \caption{Final calibration results of the rotational extrinsic parameter for global-pose aided VIO system undergoes pure translational straight line motion with variable velocity. The absolute errors of roll, pitch, and yaw at 60s, are recorded with different perturbations.}
  \centering
  \begin{tabular}{@{}cccccccccc@{}}
    \toprule
    \multirow{2}{*}{\makecell{Perturbations of\\ (roll, pitch, yaw)}} & \multicolumn{3}{c}{Case-1} & \multicolumn{3}{c}{Case-2} & \multicolumn{3}{c}{Case-3} \\
    \cmidrule(lr){2-4} \cmidrule(lr){5-7} \cmidrule(lr){8-10}
    & {Roll} & {Pitch} & {Yaw} & {Roll} & {Pitch} & {Yaw} & {Roll} & {Pitch} & {Yaw} \\
    \midrule
    (2, -4, -5) & 0.89 & 0.07 & 0.01 & 4.00 & 1.73 & 0.02 & 4.98 & 1.84 & 2.15\\
    (-4, 3, 3) & 12.49 & 0.08 & 0.01 & 9.44 & 3.98 & 0.12 & 0.89 & 0.28 & 0.37\\
    (5, -2, -1) & 1.22 & 0.08 & 0.01 & 1.05 & 0.50 & 0.01 & 7.34 & 2.71 & 3.21\\
    (-1, -5, -3) & 5.82 & 0.08 & 0.01 & 6.82 & 2.89 & 0.06 & 3.62 & 1.33 & 1.55\\
    (3, 0, 1) & 4.31 & 0.08 & 0.01 & 2.87 & 1.26 & 0.02 & 6.01 & 2.22 & 2.62\\
    (1, 2, -4) & 5.33 & 0.08 & 0.01 & 4.83 & 2.07 & 0.03 & 2.92 & 1.06 & 1.25\\
    (0, 5, 2) & 8.68 & 0.08 & 0.01 & 5.59 & 2.39 & 0.05 & 2.98 & 1.08 & 1.29\\
    (-3, 4, 0) & 10.75 & 0.08 & 0.01 & 8.53 & 3.60 & 0.10 & 1.28 & 0.43 & 0.53\\
    (-5, 1, 4) & 13.39 & 0.08 & 0.01 & 10.38 & 4.37 & 0.14 & 0.92 & 0.29 & 0.38\\
    (4, -1, 5) & 4.49 & 0.08 & 0.01 & 1.88 & 0.85 & 0.01 & 8.02 & 2.96 & 3.52\\
    (-2, -3, -2) & 7.50 & 0.08 & 0.01 & 7.80 & 3.30 & 0.08 & 3.23 & 1.18 & 1.38\\
    \hline
    Avg & 6.81 & 0.08 & 0.01 & 5.74 & 2.45 & 0.06 & 3.84 & 1.40 & 1.66\\
    \bottomrule
  \end{tabular}
  \label{tab: variable velocity-global vio}
\end{table*}

\begin{figure*}
  \centering
  \begin{subfigure}{0.96\linewidth}
    \includegraphics[width=\textwidth, height=0.2\textwidth]{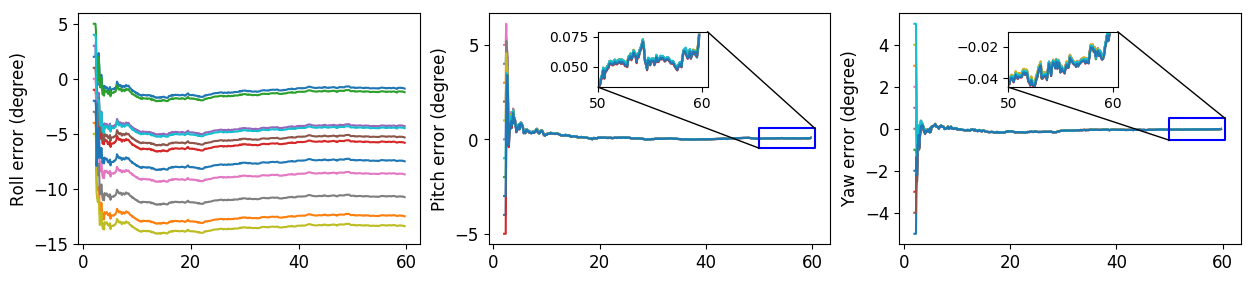}
  \end{subfigure}
  
  \begin{subfigure}{0.96\linewidth}
    \includegraphics[width=\textwidth, height=0.2\textwidth]{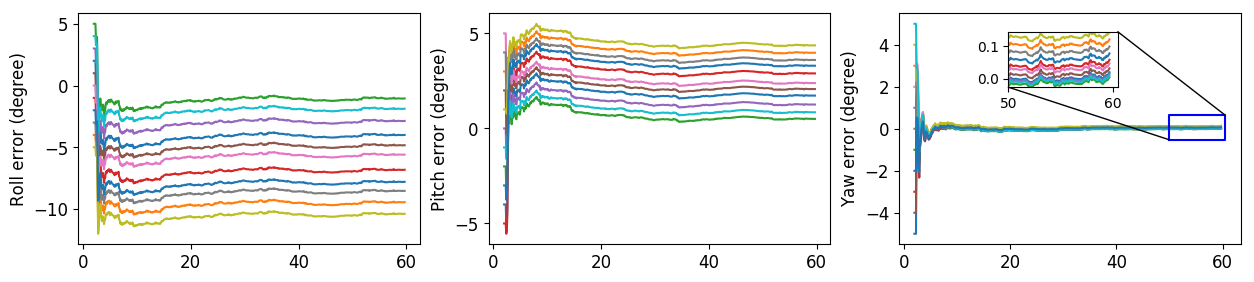}
  \end{subfigure}

  \begin{subfigure}{0.96\linewidth}
    \includegraphics[width=\textwidth, height=0.2\textwidth]{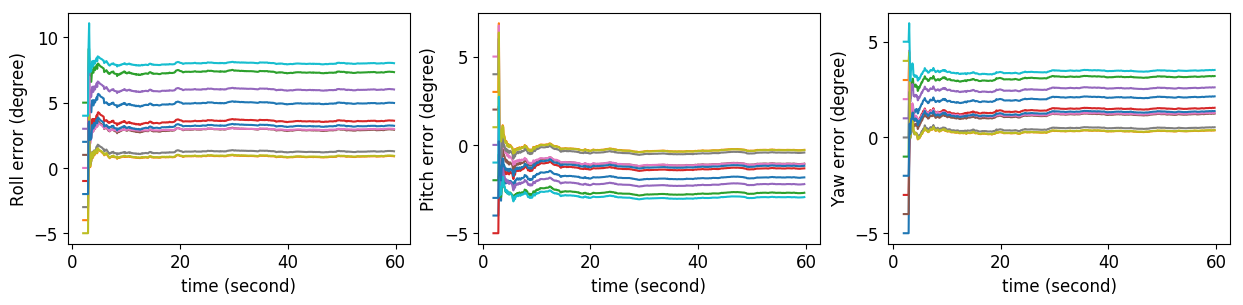}
  \end{subfigure}

  \caption{Calibration results for global-pose aided VIO system undergoes pure translational straight line motion with variable velocity. $y$-axis represents errors of the rotational calibration parameter over time respect to different initial guesses. $x$-axis represents time in seconds. Top to bottom corresponds to Case-1 to Case-3 in \cref{sec: Numerical Study}.}
  \label{fig: variable velocity-global vio}
  
\end{figure*}

\newpage
\begin{table*}
  \caption{Final calibration results of the rotational extrinsic parameter for pure VIO system undergoes pure translational straight line motion with constant velocity. The absolute errors of roll, pitch, and yaw at 60s, are recorded with different perturbations.}
  \centering
  \begin{tabular}{@{}cccccccccc@{}}
    \toprule
    \multirow{2}{*}{\makecell{Perturbations of\\ (roll, pitch, yaw)}} & \multicolumn{3}{c}{Case-1} & \multicolumn{3}{c}{Case-2} & \multicolumn{3}{c}{Case-3} \\
    \cmidrule(lr){2-4} \cmidrule(lr){5-7} \cmidrule(lr){8-10}
    & {Roll} & {Pitch} & {Yaw} & {Roll} & {Pitch} & {Yaw} & {Roll} & {Pitch} & {Yaw} \\
    \midrule
    (2, -4, -5) & 0.46 & 5.65 & 4.81 & 4.63 & 3.70 & 2.06 & 1.29 & 10.38 & 4.34\\
    (-4, 3, 3) & 2.77 & 1.82 & 3.77 & 7.74 & 2.02 & 2.03 & 5.97 & 2.39 & 6.96\\
    (5, -2, -1) & 5.40 & 6.66 & 1.41 & 7.18 & 3.41 & 0.20 & 21.48 & 10.81 & 1.41\\
    (-1, -5, -3) & 2.40 & 6.27 & 2.12 & 0.61 & 2.73 & 0.27 & 0.72 & 2.23 & 0.53\\
    (3, 0, 1) & 5.02 & 6.24 & 0.77 & 3.21 & 0.99 & 0.94 & 7.66 & 16.23 & 4.27\\
    (1, 2, -4) & 0.13 & 4.23 & 4.48 & 3.18 & 0.37 & 2.77 & 6.71 & 10.27 & 3.55\\
    (0, 5, 2) & 0.86 & 0.99 & 2.97 & 2.20 & 1.87 & 1.18 & 2.39 & 11.00 & 6.96\\
    (-3, 4, 0) & 1.63 & 3.45 & 0.41 & 3.57 & 2.04 & 0.25 & 2.33 & 3.93 & 1.67\\
    (-5, 1, 4) & 3.36 & 2.93 & 4.19 & 8.86 & 2.06 & 3.06 & 5.93 & 0.16 & 9.01\\
    (4, -1, 5) & 5.83 & 3.71 & 4.64 & 1.12 & 1.61 & 4.09 & 10.51 & 8.93 & 11.47\\
    (-2, -3, -2) & 2.57 & 6.09 & 1.25 & 0.77 & 1.56 & 0.17 & 2.92 & 2.72 & 1.42\\
    \hline
    Avg & 2.77 & 4.37 & 2.80 & 3.91 & 2.03 & 1.55 & 6.17 & 7.19 & 4.69\\
    \bottomrule
  \end{tabular}
  \label{tab: constant velocity-vio}
\end{table*}

\begin{figure*}
  \centering
  \begin{subfigure}{0.96\linewidth}
    \includegraphics[width=\textwidth, height=0.2\textwidth]{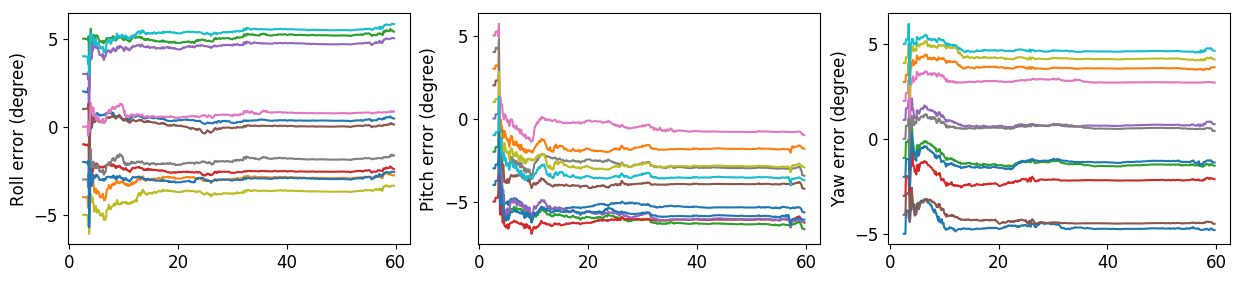}
  \end{subfigure}
  
  \begin{subfigure}{0.96\linewidth}
    \includegraphics[width=\textwidth, height=0.2\textwidth]{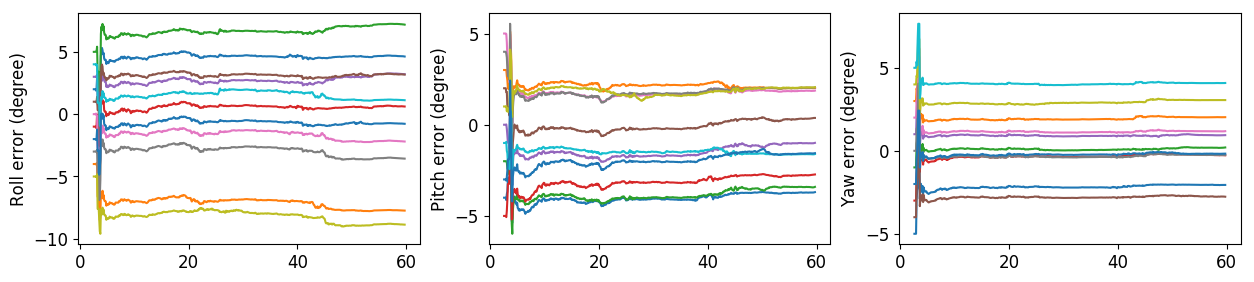}
  \end{subfigure}

  \begin{subfigure}{0.96\linewidth}
    \includegraphics[width=\textwidth, height=0.2\textwidth]{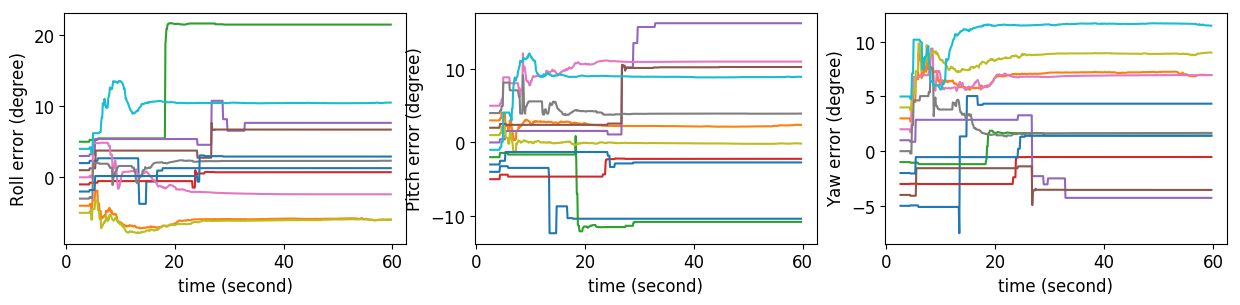}
  \end{subfigure}

  \caption{Calibration results for pure VIO system undergoes pure translational straight line motion with constant velocity. $y$-axis represents errors of the rotational calibration parameter over time respect to different initial guesses. $x$-axis represents time in seconds. Top to bottom corresponds to Case-1 to Case-3 in \cref{sec: Numerical Study}.}
  \label{fig: constant velocity-vio}
  
\end{figure*}

\begin{table*}
  \caption{Final calibration results of the rotational extrinsic parameter for global-pose aided VIO system undergoes pure translational straight line motion with constant velocity. The absolute errors of roll, pitch, and yaw at 60s, are recorded with different perturbations.}
  \centering
  \begin{tabular}{@{}cccccccccc@{}}
    \toprule
    \multirow{2}{*}{\makecell{Perturbations of\\ (roll, pitch, yaw)}} & \multicolumn{3}{c}{Case-1} & \multicolumn{3}{c}{Case-2} & \multicolumn{3}{c}{Case-3} \\
    \cmidrule(lr){2-4} \cmidrule(lr){5-7} \cmidrule(lr){8-10}
    & {Roll} & {Pitch} & {Yaw} & {Roll} & {Pitch} & {Yaw} & {Roll} & {Pitch} & {Yaw} \\
    \midrule
    (2, -4, -5) & 1.90 & 0.01 & 0.01 & 4.38 & 1.83 & 0.01 & 0.30 & 0.13 & 0.11\\
    (-4, 3, 3) & 4.56 & 0.00 & 0.02 & 2.39 & 1.00 & 0.00 & 8.82 & 3.22 & 3.90\\
    (5, -2, -1) & 1.88 & 0.00 & 0.01 & 6.66 & 2.76 & 0.03 & 9.69 & 3.57 & 4.22\\
    (-1, -5, -3) & 4.39 & 0.01 & 0.00 & 2.25 & 0.95 & 0.02 & 4.42 & 1.67 & 1.84\\
    (3, 0, 1) & 0.70 & 0.00 & 0.01 & 4.33 & 1.79 & 0.01 & 9.39 & 3.48 & 4.11\\
    (1, 2, -4) & 1.07 & 0.00 & 0.00 & 2.45 & 1.02 & 0.03 & 4.05 & 1.52 & 1.76\\
    (0, 5, 2) & 0.09 & 0.01 & 0.01 & 0.56 & 0.22 & 0.01 & 6.75 & 2.53 & 2.95\\
    (-3, 4, 0) & 3.47 & 0.01 & 0.01 & 1.46 & 0.61 & 0.01 & 4.06 & 1.52 & 1.76\\
    (-5, 1, 4) & 6.06 & 0.00 & 0.02 & 2.67 & 1.12 & 0.01 & 9.84 & 3.58 & 4.33\\
    (4, -1, 5) & 1.60 & 0.00 & 0.02 & 4.82 & 1.98 & 0.03 & 14.31 & 5.16 & 6.35\\
    (-2, -3, -2) & 4.78 & 0.01 & 0.01 & 0.96 & 0.41 & 0.02 & 2.50 & 0.93 & 1.04\\
    \hline
    Avg & 2.77 & 0.01 & 0.01 & 2.99 & 1.24 & 0.02 & 6.74 & 2.48 & 2.94\\
    \bottomrule
  \end{tabular}
  \label{tab: constant velocity-global vio}
\end{table*}

\begin{figure*}
  \centering
  \begin{subfigure}{0.96\linewidth}
    \includegraphics[width=\textwidth, height=0.2\textwidth]{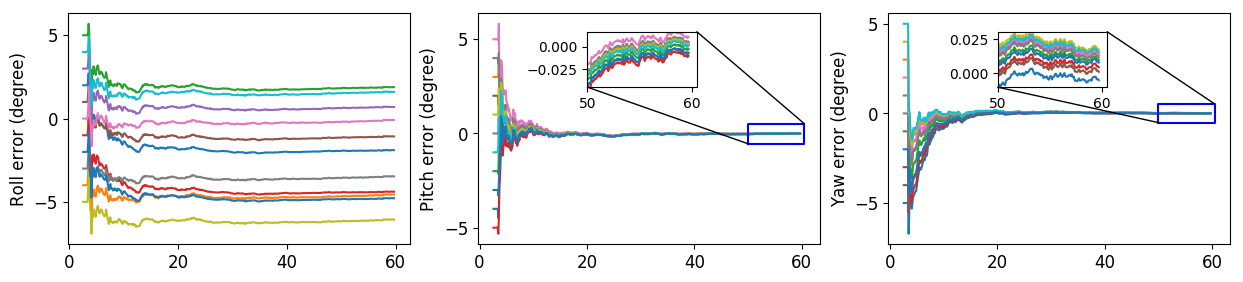}
  \end{subfigure}
  
  \begin{subfigure}{0.96\linewidth}
    \includegraphics[width=\textwidth, height=0.2\textwidth]{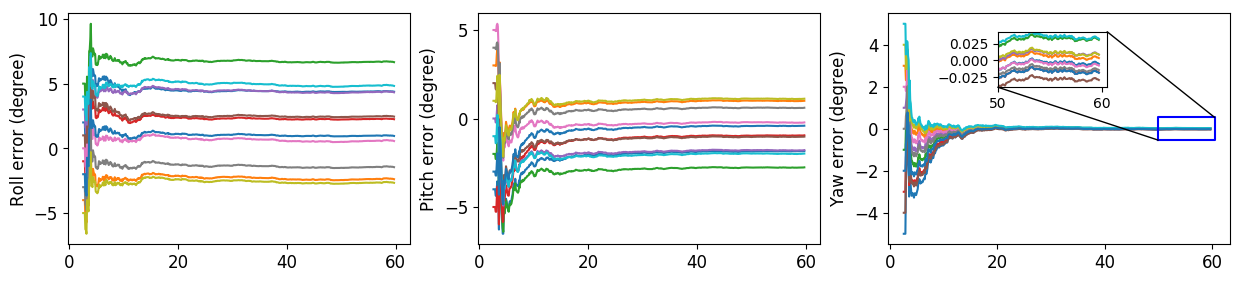}
  \end{subfigure}

  \begin{subfigure}{0.96\linewidth}
    \includegraphics[width=\textwidth, height=0.2\textwidth]{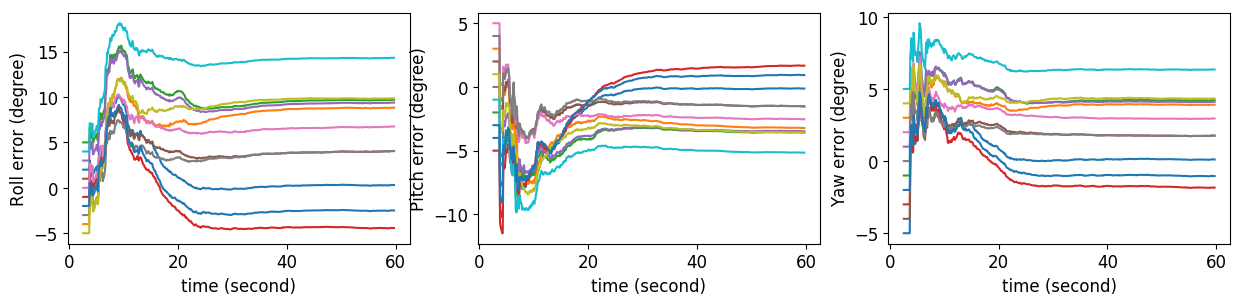}
  \end{subfigure}

  \caption{Calibration results for global-pose aided VIO system undergoes pure translational straight line motion with constant velocity. $y$-axis represents errors of the rotational calibration parameter over time respect to different initial guesses. $x$-axis represents time in seconds. Top to bottom corresponds to Case-1 to Case-3 in \cref{sec: Numerical Study}.}
  \label{fig: constant velocity-global vio}
  
\end{figure*}

\end{document}